\DeclareMathOperator*{\argmin}{arg\,min}
\newcommand{\esp}{\mathbb{E}}
\newcommand{\re}{\mathcal{R}e}
\newtheorem{theorem}{Theorem}[section]
\newtheorem{lemma}[theorem]{Lemma}
\begin{document}
\title[Comp. Learn. deep reg.]{Batch-less stochastic gradient descent for compressive learning of deep regularization for image denoising}

%
%

\author[1]{\fnm{Hui} \sur{Shi}}\email{hui.shi@u-bordeaux.fr}

\author*[1]{\fnm{Yann} \sur{Traonmilin}}\email{yann.traonmilin@math.u-bordeaux.fr}

\author[1]{\fnm{Jean-François} \sur{Aujol}}\email{jean-francois.aujol@math.u-bordeaux.fr}

\affil[1]{\orgname{Univ. Bordeaux, Bordeaux INP, CNRS,  IMB, UMR 5251}, \postcode{F-33400}, \city{Talence}, \country{France}}

 \abstract{We consider the problem of denoising  with the help of prior information taken from a database of clean signals or images. Denoising with variational methods is very efficient if a regularizer well adapted to the nature of the data is available. Thanks to the maximum a posteriori Bayesian framework, such regularizer can be systematically linked with the distribution of the data. With deep neural networks (DNN), complex distributions can be recovered from a large training database.
To reduce the computational burden of this task,  we  adapt the compressive learning framework to the learning of regularizers parametrized by DNN. We propose two variants of stochastic gradient descent (SGD) for the recovery of deep regularization parameters from a heavily compressed database. These algorithms outperform the initially proposed method that was limited to low-dimensional signals, each iteration using information from the \emph{whole database}. They also benefit from classical SGD convergence guarantees. Thanks to these improvements we show that this method can be applied for patch based image denoising.} \keywords{Regularization \and Compressive learning \and Denoising.}
%

%
\maketitle

\section{Introduction}
At the heart of imaging inverse problems,  having a precise prior information on the distribution of the unknown image is crucial for efficient recovery of said image. In particular, consider the  denoising problem, i.e. finding an accurate estimate $u^\star$ of the original image $u_0 \in \mathbb{R}^d$ from the observed noisy image $v\in \mathbb{R}^d$:
\begin{equation}
v = u_0 + \epsilon,
\end{equation}
where the noise $\epsilon$ (assumed to be additive white Gaussian noise of standard deviation $\sigma$) is independent of $u_0$.
Recovering  $u_0$ from its degraded version $v$ is an ill-posed problem and we need to use additional (prior) information about the unknown image $u_0$ to obtain meaningful solutions.
A common strategy \cite{demoment1989image} for solving inverse problems is to define an estimator $u^\star$ which is the minimizer of a functional:
\begin{equation}
u^\star \in \argmin_u F(u) + \lambda R(u),
\end{equation}
where $F$ is the data fidelity term making the solution consistent with the observation $y$ and $R$ is the regularization term that incorporates the prior information, weighted by the regularization parameter $\lambda > 0$.
The choice of $R$ depends on the statistics of the signal of interest which is not always available in real-life applications.

The maximum a posteriori (MAP) Bayesian framework provides a useful tool to interpret such methods.
The MAP estimator is given by:
\begin{equation}\label{eq:denoising}
u^\star_{\text{MAP}}
\in \argmin_u \| v-u \|_2^2 - \lambda \log (\mu(u))
\end{equation}
where $\mu $ denotes a prior probability law (of density $\mu(\cdot)$) of the unknown data $u$. In this context, the regularizer is related to the prior distribution of the data, i.e.,  $R(u)= -log(\mu(u)) $.

Defining an  accurate prior model in the form of a regularizer or distribution of the images of interest is one of the main difficulties for designing efficient estimation methods.
Classical Bayesian approaches, e.g. in image processing, rely on explicit priors such as total variation or Gaussian mixture models (GMM)~\cite{zoran2011learning} trained on a database of image patches.
Recently, researchers proposed to use DNN to design the regularizer. Methods such as the total deep variation~\cite{kobler2021total},  adversarial regularizers~\cite{lunz2018adversarial,prost2021learning}, as well as the Plug \& Play approach and its extensions \cite{venkatakrishnan2013plug,hurault2022gradient} deliver remarkably accurate results.
However, such models are typically learned from large datasets. Estimating their parameters from such a large-scale dataset is a serious computational challenge.
\paragraph{Compressive learning}
One possibility to reduce the computational resources of learning consists in using the compressive learning (CL) framework~\cite{gribonval2021compressive,gribonval2021statistical,gribonval2021sketching}.
The main idea of CL, coined as sketching, is to compress the whole data collection into a fixed-size representation, a so-called \textit{sketch} of data, such that enough information relevant to the considered learning task is captured. Then the learned parameters are estimated by minimizing a non-linear least-square problem built with the sketch.
The size of the sketch $m$ is chosen proportional to the intrinsic complexity of the learning task. Consequently, the cost of inferring the parameters of interest from the sketch does not depend on the size of the training database but on the number of parameters we want to estimate. Hence, it is possible to exploit arbitrarily large datasets in the sketching framework without demanding more computational resources.

During the sketching phase, a huge collection of $n$ $d$-dimensional data vectors $X = \{x_i\}_{i = 1}^n$ is summarized into a single $m$-dimensional $( m \ll n)$ vector $\hat{z}$ with:
\begin{equation}\label{eq: sk_emp}
\hat{z} = \frac{1}{n} \sum_{i = 1}^n \Phi(x_i) = \mathcal{S} (\hat{\mu}_n),
\end{equation} 
where $\hat{\mu}_n := \frac{1}{n} \sum_{i = 1}^n \delta_{x_i}$ is the empirical probability distribution of the data,  $\delta_{x_i}$ is the Dirac measure at $x_i$ and the function $\Phi : \mathbb{R}^d \to \mathbb{C}^m$ is called the feature map (typically random Fourier moments). The sketch $\hat{z}$ is the mean of the feature map over the whole database.

Sketching can be interpreted as a linear operation
$\mathcal{S}$  on measures $\mu$  defined by
$
\mathcal{S}\mu:= \mathbb{E}_{X \sim \mu} \Phi(X).
$
An estimate of a distribution $\mu_\theta$ (or of distributional parameters $\theta$ of interest) can be  calculated from the sketch  by solving:
\begin{equation}\label{eq: sk matching}
\mu ^\star_\theta \in \argmin_{\mu_\theta} G(\theta) = \argmin_{\mu_\theta} \| \hat{z} - \mathcal{S}\mu_\theta \| _2^2.
\end{equation}
In practice, this "sketch matching" problem can be solved by greedy compressive learning Orthogonal Matching Pursuit (OMP) algorithm and its extension Compressive Learning-OMP with replacement \cite{keriven2018sketching} when  $\mu_\theta$ is a mixture of elementary distribution (i.e. a GMM). When the distribution $\mu$ is a GMM in high-dimension with flat tail covariances, the problem can also be solved by the Low-Rank OMP algorithm (LR-OMP). It was shown that the prior model learned
with LR-OMP can be used to perform image denoising~\cite{shi2022sketching} with no loss of performance compared to the non compressive approach, and with  faster training time.

These greedy algorithms are suitable for any sketching operator $\mathcal{S}$ and any distribution density $\mu$, as long as the sketch $\mathcal{S}\mu$ and its gradient $\nabla_\theta  \mathcal{S}\mu$ with respect to the distributional parameters $\theta$ of interest have a closed-form expression: the core of these OMP-based algorithms is computing the expression of $\mathcal{S}\mu$ and $\nabla_\theta  \mathcal{S}\mu$. However, for some more general distributions, the sketching feature map  may not have a closed form. This limits the  use of the sketching framework.

In this paper, our goal is to recover a good approximation of the probability distribution of any unknown data from its sketch (i.e. beyond GMM). As neural networks (NN) have great expressive power \cite{hornik1989multilayer,pan2016expressiveness}, we propose to tackle the problem by adapting the sketching to NN. More precisely, we propose to define the regularizer $R_\theta$ parametrized by a DNN $f_\theta$ (precisely a ReLU network) as
\begin{equation}\label{eq: design}
R_\theta(\cdot) = \|f_\theta(\cdot) \|_2^2.
\end{equation}
Such a regularization corresponds to the parametric distribution density $\mu_\theta  \propto e^{-\|f_\theta (\cdot)\|_2^2}$. Thus it can be viewed as a generalized Gaussian distribution, where the bilinear form induced by the covariance matrix is replaced by a network. Due to the fact that NN have good generalization properties,
the proposed regularization should be capable of encoding complex probability distributions. Unfortunately, a direct practical application of existing tools is not possible as closed-form expressions of $\mathcal{S} \mu$ are not available for sketching operator $\mathcal{S}$ based on random Fourier features.  In  \cite{shi2023compressive}, it was shown on low-dimensional  (2D and 3D) data that it was possible to estimate a deep regularizer by approximating $\mathcal{S}$ by a sampled version on a regular grid  $\mathcal{S}_d$. Unfortunately the use of a grid limits the extension of this method to data in higher dimension such as image patches.

\paragraph{Contributions and outline}
In this work, we propose novel approaches to learn deep regularizers from sketches beyond our initial method from \cite{shi2023compressive}.
Instead of relying on a grid-based discretization of the sketching operator $\mathcal{S}$, we propose an adaptation of the stochastic gradient descent method (that we call compressive learning stochastic gradient descent, CL-SGD), dynamically generating descent directions from the \emph{whole training dataset}  with the help of  a  random discretization of the sketching operator performed at each iteration.
This strategy not only makes the approach suitable for higher-dimensional problems but also substantially enhances the efficiency and flexibility of the sketching process, requiring far fewer grid points and delivering considerably faster results compared to its predecessor. Once the neural network is trained denoising can be performed using classical variational methods.
The method is tested using both synthetic and real  audio and image data, demonstrating that a deep prior can be learned to perform patch based denoising.
Moreover we provide a theoretical analysis ensuring the convergence of our compressive learning stochastic gradient method.

The rest of this article is organized as follows. We start by introducing the sketching framework,  ReLU networks and some related works in \cref{sec: 2}. In \cref{sec: 3}, we describe the proposed framework: the adaptation of the compressive learning framework to the learning of regularizers parametrized by ReLU networks.
\Cref{sec: 5} illustrates the performance of the proposed methods on both synthetic data and real-life data. Finally, conclusions are drawn in \cref{sec: 6}.

\section{Background, related works}\label{sec: 2}
We suppose that data samples $x_i$ are modeled as independent and identically distributed random vectors having an unknown probability distribution with density $\mu \in \mathcal{M}(\mathcal{D})$ (where $\mathcal{M}(\mathcal{D})$ is the set of measures having a (Gateaux)-differentiable  density supported  on a domain $ \mathcal{D}\subset \mathbb{R}^d$). For simplicity we identify $\mu$ with its density $\mu : \mathbb{R}^d \to \mathbb{R}^+$. Hence we can evaluate $\mu$ on any point $p_i \in \mathbb{R}^d $ with $\mu(p_i)$. For a collection of
points $\mathbf{p}= (p_i)_{i=1}^P$, we write $\mu(\mathbf{p}) = (\mu(p_i))_{i=1}^P \in \mathbb{R}^P$.
We define the linear sketching operator $\mathcal{S}$ that maps
$\mu$ to the $m$-dimensional sketch vector $z$:
\begin{equation}
\begin{split}
\mathcal{S} : \;& \mathcal{M}(\mathcal{D})\rightarrow \mathbb{C}^m \\
& z = \mathcal{S} \mu:= \int_{ \mathbb{R}^d} \mu(x) \Phi(x)dx .
\end{split}
\end{equation} 
When the transformation (sketching feature map) $\Phi(\cdot)$ is built with random frequencies of the Fourier transform, 
the $l$-th component of the sketch is
\begin{equation}\label{eq:sk component}
z_l = \int_{ \mathbb{R}^d} e ^{- j \langle \omega_l, x \rangle}\mu(x) dx, \quad \text{for} \quad l = 1, \dots , m,
\end{equation}
where $\{\omega_l\}_{l = 1}^m \in \mathbb{R}^d$ are
frequencies drawn at random.
Taking a statistical perspective, the components $z_l$ can be seen as samples of the characteristic function of $\mu_{\mathcal{X}}$. 
Accordingly, given a dataset $X = \{x_i \}_{i=1}^{n}$, the empirical sketch $\hat{z}$ can be computed from the samples of the database as
\begin{equation}
\hat{z}_l = \frac{1}{n} \sum_{i = 1}^n e^{-j \langle\omega_l, x_i \rangle}  , \quad \text{for} \quad l = 1, \dots , m.
\end{equation}
The compression ratio $r$ is $m/nd$.
It was shown \cite{keriven2018sketching,gribonval2020sketching,gribonval2021statistical} that when the probability distribution $\mu$ has a low dimension structure, e.g. a GMM, one can recover it (with high probability) from enough randomly chosen samples of its Fourier transform. The required size of the sketch is typically  of the order of the number of parameters we need to estimate.
\paragraph{ReLU network}
A ReLU network, denoted by $f_\theta$, is defined as a fully connected, feed-forward network (multi-layer perceptrons) with rectified linear unit (ReLU) activations. This activation has grown in popularity in feed-forward networks due to the success of first-order gradient based heuristic algorithms and the improvement in convergence to the approximated function for training \cite{nair2010rectified}.
\paragraph{Related works}
The sketching framework has been successfully applied to parametric models including GMMs \cite{keriven2018sketching,gribonval2021statistical,shi2022sketching}, K-means clustering \cite{keriven2018sketching,gribonval2021statistical} and classification \cite{schellekens2018compressive}.
These methods are limited to the models for which the sketch function has a closed form.
In our work, we apply the sketching to neural networks to encode more complex and high-dimensional probability distributions.
Sketching techniques have found applications in neural networks, as evidenced by their use in \cite{schellekens2020compressive} and \cite{shi2023compressive}. In \cite{schellekens2020compressive}, the integration of sketching with generative networks enables the generation of data samples, while \cite{shi2023compressive} pursues the goal of developing a deep regularizer for solving inverse problems.
Notably, in \cite{schellekens2020compressive}, the authors suggest an approximation of the sketching map using Monte-Carlo sampling, whereas in \cite{shi2023compressive}, we  chose for a discrete sketching operator for the approximation. It is worth highlighting that the sketching framework outlined earlier emphasizes data-independent approximation, specifically obtaining sketches through the averaging of random features.



\section{Proposed method}\label{sec: 3}
In this section, we  explain how we adapt the sketching framework to estimate regularizations by DNN. We then provide a theoretical analysis of our method and we detail the implementation of our algorithms.
\subsection{Previous work}
We start by explaining why there are no explicit closed-form expressions of the sketching function available in the context of prior parametrized by DNN. Intuitively, since ReLU networks define piecewise affine functions, we can indeed express a ReLU network $f_\theta$ as:
\begin{equation}
f_\theta(x)= \sum_{\gamma=1}^{N_R} \mathbf{1}_{R_\gamma}(x)(W_\gamma x +b_\gamma),
\end{equation}
where $\mathbf{1}_{R_\gamma}$ is the indicator function of each of the $N_R$ affine regions $R_\gamma$, with parameters ($W_\gamma,b_\gamma$).


Given a dataset $X$, we aim at learning, from only the sketch $\hat{z}$, an approximation $\mu_{\theta^*}$ of the probability distribution $\mu$ generating $X$.
We consider a regularizer of the form $R_\theta(\cdot) = \| f_\theta(\cdot) \|_2^2$ which corresponds to parametric densities of the form $\mu_\theta(\cdot) \propto e^{- R_\theta(\cdot)}$.
Ideally, with the definition \eqref{eq:sk component}, the sketch would have to be calculated as
\begin{equation}\label{eq:sk}
\begin{split}
z_l 
&= \int_{\mathbb{R}^d} e^{-j\langle \omega, x \rangle }e^{-\|f_\theta(x) \|_2^2} dx\\
&= \int_{\mathbb{R}^d}
 e^{-j\langle \omega, x \rangle }e^{- \sum_{p = 1}^d (\sum_\gamma^{N_R} \mathbf{1}_{R_\gamma}(x)((W_\gamma x) +{b_\gamma}))^2} dx .\\
\end{split}
\end{equation}
This would require Fourier Transforms on the individual regions $R_\gamma$.
However, to the best of our knowledge, there is no analytic expression of such Fourier transform (Fourier transform on polygons).

In our previous work~\cite{shi2023compressive}, it was proposed to  perform the following minimization
\begin{equation}\label{eq_Sd}
\theta^* \in \argmin_{\theta \in \Theta} \left\|\mathcal{S}_{\mathbf{p}}\mu_\theta  - \hat{z} \right \|_2^2,
\end{equation}
where $\mathcal{S}_\mathbf{p}$ is a discretization on a regular grid  ${\mathbf{p}} = (p_i)_{i=1}^P \subset \mathbb{R}^d$ over the domain of the data, i.e.
\begin{equation}
 (\mathcal{S}_\mathbf{p} \mu_\theta)_l \propto \sum_{p_i = 1}^P e^{-j \langle \omega_l,p_i\rangle} \mu_\theta (p_i)
\end{equation}
This optimization problem is  then solved through gradient descent based methods.
Considering the data at iteration $t$ as $\theta^t$, the update step can be expressed as follows:
\begin{equation}
\theta^{t} = \theta^{t-1} - \eta 2 \re \left( ( \nabla \mathcal{S}_\mathbf{p} \mu_\theta )^* \left(\mathcal{S}_{\mathbf{p}}\mu_\theta - \hat{z}
\right)\right)
\label{eq_init_method}
\end{equation}
where $\eta > 0$ represents the learning rate.

With the discretization, if $\mu_\theta$ is differentiable at point $p_i$, the gradient of $\mathcal{S}_p\mu_{\theta}$ with respect to the parameters $\theta$ can be computed easily by
using the automatic differentiation. Note that the discretization is used only in the estimation of the regularizer from the sketch. It thus only impacts the calculation time and memory requirement of the estimation of the regularizer and not the size of the compressed dataset itself.
Of course, the major pitfall of this approximation is the limitation for applications in high dimension as the number of points is exponential with respect to the dimension $d$. The required boundedness (or approximate boundedness such as in the Gaussian case) of the data is a valid assumption in many practical applications in signal and image processing.
\subsection{Compressive learning stochastic gradient descent (CL-SGD)}\label{sec:pres_clsgd}
Instead of discretizing the forward operator $\mathcal{S}$ once on a grid, we propose to perform a stochastic gradient descent where descent directions are generated with the help of  a different random uniform discretization of  $\mathcal{S}$ at each step. We first introduce a naïve discretization which we then adadapt to a CL-SGD method with theoretical convergence guarantees.

As an approximation of  the minimization of $G$, first consider the discretized operator on a random grid ${\mathbf{p}}$ where $p_i \sim \mathcal{U}(\mathcal{D})$ ($\mathcal{U}(\mathcal{D})$ is the uniform distribution on  $\mathcal{D}$) and the corresponding function
\begin{equation}
\begin{split}
 &G_\mathbf{p} (\theta) = \left\| \mathcal{S}_\mathbf{p} \mu_\theta  - \hat{z} \right \|_2^2.
 \end{split}
\end{equation}
We remark that $\mathcal{S}_\mathbf{p} \mu_\theta$ can be written as 
\begin{equation}
\begin{split}
 \mathcal{S}_\mathbf{p} \mu_\theta = B_\mathbf{p} \mu_\theta(\mathbf{p})
 \end{split}
\end{equation}
where $\mu_\theta(\mathbf{p})$ is the density of $\mu_\theta$ evaluated on the grid $\mathbf{p}$ and where $B_{\mathbf{p},l,i} = \frac{e^{-j \langle\omega_l,p_i \rangle}}{P}$. It is shown in Section~\ref{sec:consistency}  that $\esp B_\mathbf{p}(\mu_\theta(\mathbf{p}) )= \mathcal{S}\mu_\theta$, i.e. this random discretization is consistent with the sketch in expectation. Consider the minimization of $G_\mathbf{p}$
\begin{equation}
\begin{split}
 \min_{\theta\in \Theta}\left\| B_\mathbf{p} \mu_\theta(\mathbf{p}) - \hat{z} \right \|_2^2.
 \end{split}
\end{equation}
where $\theta \subset \mathbb{R}^{d_0}$ is the set where the parameters (weights and bias) of the DNN live.
This is a simple non-linear least square problem. We calculate the directional derivative of this functional with respect to $\theta$ in a direction $h$ (the gradient is the evaluation of these derivatives in the directions formed by the canonical basis of $\mathbb{R}^{d_0}$):

\begin{equation}
\begin{split}
 \frac{1}{2} \partial_h G_\mathbf{p} (\theta)  &=  \re \langle  B_\mathbf{p} \partial_h \mu_{\theta}(\mathbf{p}), B_\mathbf{p}\mu_{\theta}(\mathbf{p}) -\hat{z}\rangle  \\
  \end{split}
\end{equation}

Recall that the directional derivatives of $G$ are given by

\begin{equation}
\begin{split}
 \frac{1}{2} \partial_h G (\theta) &= \re \langle \mathcal{S}  \partial_h \mu_{\theta}, \mathcal{S}\mu_{\theta} -\hat{z} \rangle.\\
   \end{split}
\end{equation}
Recall that we  supposed that $\mu_{\theta}$ is differentiable with respect to $\theta$ in any direction $h$ (usually referred as  Gateaux differentiability).
We  link the expectation of these derivatives with the directional derivatives of the original sketch matching functional $G$ with the following Lemma.
\begin{lemma}\label{lem:exp_deriv_naive}
Consider  $\mathcal{S}$ constructed with frequencies $(\omega_l)_{l=1}^m$.  Let $\mathbf{p} = (p_i)_{i=1}^P$ with $p_i \in \mathcal{U}(\mathcal{D})$, $\mu_\theta \in \mathcal{M}(\mathcal{D})$ and $h \in \mathbb{R}^{d_0}$ such that $\|h\|_2=1$. Then,
 \begin{equation}
 \begin{split}
 \esp_\mathbf{p} \partial_h G_\mathbf{p} (\theta)   &=\partial_h G (\theta) \\
    &+\frac{2}{P}   (m\langle \partial_h \mu_{\theta}, \mu_{\theta} \rangle_{L^2(\mathcal{D})} -\langle \mathcal{S}\partial_h \mu_{\theta},  \mathcal{S}\mu_{\theta} \rangle). \\
    \end{split}
 \end{equation}

\end{lemma}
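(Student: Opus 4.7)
The plan is to expand $\tfrac{1}{2}\partial_h G_\mathbf{p}(\theta) = \re\langle B_\mathbf{p}\partial_h\mu_\theta(\mathbf{p}), B_\mathbf{p}\mu_\theta(\mathbf{p})\rangle - \re\langle B_\mathbf{p}\partial_h\mu_\theta(\mathbf{p}),\hat{z}\rangle$ and to compute the expectation over the i.i.d.\ uniform sample $\mathbf{p}$ of each piece separately. The term involving $\hat{z}$ is the easy one: since $\hat{z}$ is deterministic, linearity of expectation reduces it to $\esp B_\mathbf{p}\partial_h\mu_\theta(\mathbf{p})$. Applying the consistency identity $\esp B_\mathbf{p}\nu(\mathbf{p}) = \mathcal{S}\nu$ (announced just before the lemma and to be established in Section~\ref{sec:consistency}) to the Gateaux derivative $\nu=\partial_h\mu_\theta$ immediately yields $\esp\langle B_\mathbf{p}\partial_h\mu_\theta(\mathbf{p}),\hat{z}\rangle = \langle\mathcal{S}\partial_h\mu_\theta,\hat{z}\rangle$.

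The core computation lies in the bilinear term, which I would expand in coordinates:
\begin{equation*}
\langle B_\mathbf{p}\partial_h\mu_\theta(\mathbf{p}), B_\mathbf{p}\mu_\theta(\mathbf{p})\rangle = \frac{1}{P^2}\sum_{l=1}^{m}\sum_{i,i'=1}^{P} e^{j\langle\omega_l,\, p_i-p_{i'}\rangle}\,\partial_h\mu_\theta(p_i)\,\mu_\theta(p_{i'}).
\end{equation*}
The key idea is to split the double sum into the diagonal $\{i=i'\}$ and the off-diagonal $\{i\neq i'\}$ parts before taking expectation. Off-diagonally, the independence of $p_i$ and $p_{i'}$ factorizes the expectation as $\esp[e^{j\langle\omega_l,p_i\rangle}\partial_h\mu_\theta(p_i)]\,\esp[e^{-j\langle\omega_l,p_{i'}\rangle}\mu_\theta(p_{i'})] = \overline{(\mathcal{S}\partial_h\mu_\theta)_l}\,(\mathcal{S}\mu_\theta)_l$ by the same consistency identity; summing over the $P(P-1)$ off-diagonal pairs and the $m$ frequencies gives $\tfrac{P-1}{P}\langle\mathcal{S}\partial_h\mu_\theta,\mathcal{S}\mu_\theta\rangle$. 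On the diagonal, the Fourier phase collapses to $1$ and the expectation becomes $\esp[\partial_h\mu_\theta(p)\mu_\theta(p)] = \langle\partial_h\mu_\theta,\mu_\theta\rangle_{L^2(\mathcal{D})}$, so the $P$ diagonal pairs and $m$ frequencies contribute $\tfrac{m}{P}\langle\partial_h\mu_\theta,\mu_\theta\rangle_{L^2(\mathcal{D})}$. Writing $\tfrac{P-1}{P}=1-\tfrac{1}{P}$ collects both contributions into $\langle\mathcal{S}\partial_h\mu_\theta,\mathcal{S}\mu_\theta\rangle + \tfrac{1}{P}\bigl(m\langle\partial_h\mu_\theta,\mu_\theta\rangle_{L^2(\mathcal{D})} - \langle\mathcal{S}\partial_h\mu_\theta,\mathcal{S}\mu_\theta\rangle\bigr)$.

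Subtracting the $\hat{z}$ contribution, multiplying by $2$ and taking real parts recognizes $\partial_h G(\theta) = 2\re\langle\mathcal{S}\partial_h\mu_\theta,\mathcal{S}\mu_\theta-\hat{z}\rangle$ as the leading term and leaves exactly the announced $\tfrac{2}{P}$ correction. The hypothesis $\|h\|_2 = 1$ does not enter the identity itself—both sides are linear in $h$—but fixes a convenient normalization for the subsequent stochastic gradient analysis. The main obstacle I anticipate is the combinatorial bookkeeping of the double sum, that is, separating the $P$ diagonal from the $P(P-1)$ off-diagonal contributions without losing conjugation signs in the Hermitian inner product; the interchange of expectation with the finite sums is routine by Fubini once one uses integrability of $\mu_\theta$ and $\partial_h\mu_\theta$ on the bounded domain $\mathcal{D}$.
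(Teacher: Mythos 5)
Your proposal is correct and follows essentially the same route as the paper: split $\tfrac{1}{2}\partial_h G_\mathbf{p}$ into the bilinear term and the $\hat{z}$ term, handle the latter with the consistency identity $\esp_\mathbf{p} B_\mathbf{p}\nu(\mathbf{p})=\mathcal{S}\nu$, and evaluate the former via the diagonal/off-diagonal decomposition of the double sum, which is exactly the content of the paper's Lemma~\ref{lem:consistency2} that you reprove inline rather than cite. The bookkeeping you carry out (the $\tfrac{m}{P}$ diagonal contribution and the $\tfrac{P-1}{P}$ off-diagonal factorization by independence) matches the paper's computation, so the only difference is organizational, not mathematical.
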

\begin{proof}
Thanks to Lemma~\ref{lem:consistency2}, we calculate the expectation.
 \begin{equation}
\begin{split}
 & \frac{1}{2}\esp_\mathbf{p} \partial_h H (\theta) =   \esp_\mathbf{p} \langle  B_\mathbf{p} \partial_h \mu_{\theta}, B_\mathbf{p}\mu_{\theta} -\hat{z}\rangle \\
  &=  \esp \langle  B_\mathbf{p} \partial_h \mu_{\theta}, B_\mathbf{p}\mu_{\theta}\rangle - \langle \mathcal{S}  \partial_h \mu_{\theta}, \hat{z} \rangle\\
    &=   \frac{m}{P}   \langle \mu_1 , \mu_2 \rangle_{L^2(\mathcal{D})}  +\frac{P-1}{P}\langle \mathcal{S}\mu_1,  \mathcal{S}\mu_2 \rangle \\
    &- \langle \mathcal{S}  \partial_h \mu_{\theta}, \hat{z} \rangle \\
    &=\langle \mathcal{S}  \partial_h \mu_{\theta}, \mathcal{S}\mu_{\theta} -\hat{z} \rangle \\
    &+\frac{1}{P}   (m\langle \partial_h \mu_{\theta_n}, \mu_{\theta} \rangle_{L^2(\mathcal{D})} -\langle \mathcal{S}\partial_h \mu_{\theta},  \mathcal{S}\mu_{\theta} \rangle). \\
\end{split}
 \end{equation}
\end{proof}
Remark that, unfortunately, there is a bias term that converges to $0$ when $P \to \infty$. Hence this method will permit to obtain an approximation of the true gradients of $G$ when $P$ is large enough (note that the quality of the approximation does not depend on the size of the training database).  The corresponding theoretical naïve SGD algorithm is given in Algorithm~\ref{alg:meth1}.

\begin{algorithm2e}[t]\label{alg:meth1}
\caption{Naive CL-SGD algorithm}
\KwData{Sketch $\hat{z}$, $K$ iterations, $\gamma$ step size}
 $z_0 \gets \hat{z}$\;
 $\theta_0$ randomly initialized\;
 \For{$k = 0,..., K$}{Generate a grid $\mathbf{p}$ of $P$ points $p_i \sim \mathcal{U}(\mathcal{D})$\;
 $\theta_{k+1} = \theta_k - \gamma \nabla G_\mathbf{p}(\theta_k,z_k, p)$\;}
\end{algorithm2e}

Looking at the origin of the bias term in the proof of Lemma~\ref{lem:exp_deriv_naive}, we are able to propose another approximation of the directional derivative,  which is unbiased. Using two i.i.d  random grids $\mathbf{p}$ and $\mathbf{q}$, we define the directions
\begin{equation}
 d_{h,\mathbf{p},\mathbf{q}} =2\langle  B_\mathbf{p} \partial_h \mu_{\theta}(\mathbf{p}), B_\mathbf{q}\mu_{\theta}(\mathbf{q}) -\hat{z}\rangle
\end{equation}
Its expectation is exactly the gradient of the sketch matching problem.
\begin{lemma}\label{lem:partial_meth2}
Consider  $\mathcal{S}$ constructed with frequencies $(\omega_l)_{l=1}^m$.  Let $\mathbf{p} = (p_i)_{i=1}^P,\mathbf{q} = (q_i)_{i=1}^P$ with $p_i,q_i \in \mathcal{U}(\mathcal{D})$ i.i.d., $\mu_\theta \in \mathcal{M}(\mathcal{D})$ and $h \in \mathbb{R}^{d_0}$ such that $\|h\|_2=1$, then
 \begin{equation}
 \begin{split}
 \esp_{\mathbf{p},\mathbf{q}}  \left(d_{h,\mathbf{p},\mathbf{q}}\right)  &=\partial_h G (\theta). \\
    \end{split}
 \end{equation}

\end{lemma}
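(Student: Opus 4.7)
The plan is to exploit the independence of the two random grids $\mathbf{p}$ and $\mathbf{q}$ to turn the expectation of an inner product into the inner product of two expectations, which is exactly what failed in the naive case of Lemma~\ref{lem:exp_deriv_naive} and produced the $1/P$ bias.

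First, I would split $d_{h,\mathbf{p},\mathbf{q}}$ linearly as
\begin{equation}
 d_{h,\mathbf{p},\mathbf{q}} = 2\langle B_\mathbf{p}\partial_h\mu_\theta(\mathbf{p}),\, B_\mathbf{q}\mu_\theta(\mathbf{q})\rangle - 2\langle B_\mathbf{p}\partial_h\mu_\theta(\mathbf{p}),\, \hat{z}\rangle,
\end{equation}
and (taking real parts as in the expression for $\partial_h G$) treat the two terms separately. The second term only involves $\mathbf{p}$, so by linearity and the consistency identity $\esp_\mathbf{p}[B_\mathbf{p}\nu(\mathbf{p})] = \mathcal{S}\nu$ (the content of the lemma invoked in the previous proof, \texttt{lem:consistency2}) applied to $\nu=\partial_h\mu_\theta$, I get $\esp_\mathbf{p}\langle B_\mathbf{p}\partial_h\mu_\theta(\mathbf{p}),\hat{z}\rangle = \langle \mathcal{S}\partial_h\mu_\theta,\hat{z}\rangle$.

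For the first term, the crucial step is independence: since $\mathbf{p}$ and $\mathbf{q}$ are i.i.d., conditioning on $\mathbf{p}$ and using linearity of the inner product in the second argument gives
\begin{equation}
 \esp_{\mathbf{p},\mathbf{q}}\langle B_\mathbf{p}\partial_h\mu_\theta(\mathbf{p}), B_\mathbf{q}\mu_\theta(\mathbf{q})\rangle = \langle \esp_\mathbf{p} B_\mathbf{p}\partial_h\mu_\theta(\mathbf{p}),\, \esp_\mathbf{q} B_\mathbf{q}\mu_\theta(\mathbf{q})\rangle = \langle \mathcal{S}\partial_h\mu_\theta,\, \mathcal{S}\mu_\theta\rangle,
\end{equation}
again via the consistency lemma. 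This is exactly the place where independence avoids the diagonal contribution $\tfrac{m}{P}\langle \partial_h\mu_\theta,\mu_\theta\rangle_{L^2(\mathcal{D})}$ that appeared in Lemma~\ref{lem:exp_deriv_naive}: had we used a single grid, the i.i.d.\ factorization above would have to be replaced by a computation that separates diagonal $(i=i')$ from off-diagonal terms, producing the unwanted $1/P$ correction.

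Combining the two pieces and taking real parts yields
\begin{equation}
 \esp_{\mathbf{p},\mathbf{q}} d_{h,\mathbf{p},\mathbf{q}} = 2\,\re\langle \mathcal{S}\partial_h\mu_\theta,\, \mathcal{S}\mu_\theta - \hat{z}\rangle = \partial_h G(\theta),
\end{equation}
using the already recalled formula for $\partial_h G$. The only mild subtlety to watch is justifying the exchange of expectation and the (finite-dimensional) inner product, which is immediate since $B_\mathbf{p}$ has bounded entries and $\mu_\theta$, $\partial_h\mu_\theta$ are evaluated at finitely many points, so all quantities are integrable. No other real obstacle is expected: the proof is essentially a one-line observation that i.i.d.\ sampling makes the cross-covariance vanish.
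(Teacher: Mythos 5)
Your proof is correct and follows essentially the same route as the paper's: you use the independence of $\mathbf{p}$ and $\mathbf{q}$ to factor the expectation of the inner product and then apply the consistency identity $\esp_\mathbf{p}\left[ B_\mathbf{p}\nu(\mathbf{p})\right]=\mathcal{S}\nu$ to each factor, your explicit split into the cross term and the $\hat{z}$ term being just a slightly more detailed presentation of the same step. Note only a citation slip: the identity you invoke is Lemma~\ref{lem:consistency}, not Lemma~\ref{lem:consistency2} (the latter is the single-grid cross-product formula responsible for the bias in Lemma~\ref{lem:exp_deriv_naive}).
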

\begin{proof}
We calculate the expectation.
 \begin{equation}
\begin{split}
 &\esp_{\mathbf{p},\mathbf{q}}  (d_{h,\mathbf{p},\mathbf{q}})=   \esp_{\mathbf{p},\mathbf{q}}\langle  B_\mathbf{p} \partial_h \mu_{\theta}(\mathbf{p}), B_\mathbf{q}\mu_{\theta}(\mathbf{q}) -\hat{z}\rangle \\
\end{split}
 \end{equation}
 As $\mathbf{p},\mathbf{q}$ are i.i.d we have
 \begin{equation}
\begin{split}
 &\esp_{\mathbf{p},\mathbf{q}}  d_{h,p,\mathbf{q}}=  \langle  \esp_{\mathbf{p}} B_\mathbf{p} \partial_h \mu_{\theta}(\mathbf{p}), \esp_{\mathbf{q}} B_\mathbf{q}\mu_{\theta}(\mathbf{q}) -\hat{z}\rangle \\
\end{split}
 \end{equation}
 Using Lemma~\ref{lem:consistency} we have $\esp_{\mathbf{p}}\left( B_\mathbf{p} \partial_h \mu_{\theta}(\mathbf{p})\right) =\mathcal{S} \partial_h \mu_{\theta}$ and $\esp_{\mathbf{q}}  \left( B_\mathbf{\mathbf{q}}\mu_{\theta}(\mathbf{q}) \right) =\mathcal{S}  \mu_{\theta}$. This gives
  \begin{equation}
\begin{split}
 &\esp_{\mathbf{p},\mathbf{q}}  d_{h,\mathbf{p},\mathbf{q}}=\langle \mathcal{S}  \partial_h \mu_{\theta}, \mathcal{S}\mu_{\theta} -\hat{z} \rangle .\\
\end{split}
 \end{equation}
 \end{proof}

 With this expression we can build a stochastic descent direction with  expectation being exactly the gradient of $G$:

 \begin{equation}
  D_{\mathbf{p},\mathbf{q}} = (d_{e_i,\mathbf{p},\mathbf{q}})_{i=1}^{d_0}
 \end{equation}
 where $e_i$ are the elements of the canonical basis of $\mathbb{R}^{d_0} \supset \Theta$. We have
\begin{lemma}\label{lem:gradient_meth2}
Under the hypotheses of Lemma~\ref{lem:partial_meth2}, we have
 \begin{equation}
 \begin{split}
 \esp_{\mathbf{p},\mathbf{q}} \left( D_{\mathbf{p},\mathbf{q}}\right) &= \nabla G (\theta). \\
    \end{split}
 \end{equation}
\end{lemma}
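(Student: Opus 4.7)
The plan is straightforward: Lemma~\ref{lem:gradient_meth2} is essentially a componentwise restatement of Lemma~\ref{lem:partial_meth2}, so the proof amounts to unpacking the definitions of $D_{\mathbf{p},\mathbf{q}}$ and $\nabla G(\theta)$ in the canonical basis and invoking linearity of expectation. I do not expect any genuine obstacle here; the substantive content has already been established in the previous lemma.

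First I would recall that, since $G$ is (Gateaux-)differentiable as a consequence of the differentiability of $\mu_\theta$, the gradient $\nabla G(\theta) \in \mathbb{R}^{d_0}$ is characterized by its coordinates in the canonical basis $(e_i)_{i=1}^{d_0}$: its $i$-th coordinate is exactly the directional derivative $\partial_{e_i} G(\theta)$ (noting that $\|e_i\|_2 = 1$, so the normalization hypothesis of Lemma~\ref{lem:partial_meth2} is satisfied for each $e_i$).

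Next I would observe that by the very definition $D_{\mathbf{p},\mathbf{q}} = (d_{e_i,\mathbf{p},\mathbf{q}})_{i=1}^{d_0}$, the $i$-th coordinate of the random vector $D_{\mathbf{p},\mathbf{q}}$ is the scalar random variable $d_{e_i,\mathbf{p},\mathbf{q}}$. Applying Lemma~\ref{lem:partial_meth2} with $h = e_i$ then yields
\begin{equation}
\esp_{\mathbf{p},\mathbf{q}}\, d_{e_i,\mathbf{p},\mathbf{q}} = \partial_{e_i} G(\theta)
\end{equation}
for every $i \in \{1,\dots,d_0\}$. Finally, expectation of a random vector in $\mathbb{R}^{d_0}$ is taken componentwise, so collecting these $d_0$ scalar identities produces the vector equality $\esp_{\mathbf{p},\mathbf{q}}(D_{\mathbf{p},\mathbf{q}}) = \nabla G(\theta)$. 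The purpose of the statement is simply to package the $d_0$ directional results from Lemma~\ref{lem:partial_meth2} into the single unbiasedness identity needed for a standard SGD convergence analysis in the remainder of the paper.
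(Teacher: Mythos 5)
Your proof is correct and matches the paper's argument exactly: the paper's proof is the one-line application of Lemma~\ref{lem:partial_meth2} with $h=e_i$ for the canonical basis, and you simply make explicit the (routine) facts that $\|e_i\|_2=1$ and that the expectation of the vector $D_{\mathbf{p},\mathbf{q}}$ is taken componentwise. Nothing further is needed.
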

\begin{proof}
 Use   Lemma~\ref{lem:partial_meth2}, for $h =e_i$ where the $e_i$ form the canonical basis of $\mathbb{R}^{d_0}$.
\end{proof}
We also have that a direct application of Lemma~\ref{lem:variance} permits to bound the variance of this estimator of $\nabla G (\theta) $:

\begin{equation}
\esp_{\mathbf{p},\mathbf{q}} \| D_{\mathbf{p},\mathbf{q}}  -\nabla G (\theta)\|_2^2 = O\left(\frac{1}{P}\right)
\end{equation}

This leads to the second theoretical method Algorithm~\ref{alg:meth2}.

\begin{algorithm2e}[t]\label{alg:meth2}
\caption{Unbiased CL-SGD algorithm}
\KwData{Sketch $\hat{z}$, $K$ iterations, $\tau$ step size}
 $z_0 \gets \hat{z}$\;
 $\theta_0$ randomly initialized\;
 \For{$k = 0,..., K$}{Generate a grid $\mathbf{p}$ of $P$ points $p_i \sim \mathcal{U}(\mathcal{D})$\;
 Generate a grid $\mathbf{q}$ of $P$ points $q_i \sim \mathcal{U}(\mathcal{D})$\;
 $\theta_{k+1} = \theta_k - \tau D_{\mathbf{p},\mathbf{q}}$\;}
\end{algorithm2e}

\subsection{Practical implementation of CL-SGD}
In practice, we implement the theoretical Algorithms \ref{alg:meth1} and \ref{alg:meth2} wih slight differences.

For  Algorithm~\ref{alg:meth1},we estimate the normalization constant of $\mu_{\theta}$ that best matches  $\mu_{\theta}$  to the sketch in the least square sense at the current iteration i.e. we minimize for $\alpha_p$ the objective $\|B_{\mathbf{p}}\alpha\mu_{\theta}-\hat{z}\|_2^2$. This gives the update
\begin{equation}
\alpha_{\mathbf{p}} =   \frac{ \lvert\langle B_{\mathbf{p}}\mu_{\theta}, \hat{z}\rangle \rvert }{\|B_{\mathbf{p}}\mu_{\theta}\|_2^2}.
\end{equation}
This yields the practical implementation of Algorithm~\ref{alg:meth1} described in Algorithm \ref{alg:meth1_pract}.

\begin{algorithm2e}[ht]\label{alg:meth1_pract}
\caption{Practical implementation  of naive CL-SGD Algorithm~\ref{alg:meth1}}
\KwData{Sketch $\hat{z}$, $K$ iterations, $\tau$ step size}
 $z_0 \gets \hat{z}$\;
 $\theta_0$ randomly initialized\;
 \For{$k = 0,..., K$}{Generate a grid $\mathbf{p}$ of $P$ points $p_i \sim \mathcal{U}(\mathcal{D})$\;
 $\alpha_{\mathbf{p}} =   \frac{\lvert\langle B_{\mathbf{p}}\mu_{\theta_k}, \hat{z}\rangle \rvert}{\|\langle B_{\mathbf{p}}\mu_{\theta_k}\|_2^2}$\;
 Set $H_1(\theta) = \|B_{\mathbf{p}}\alpha_{\mathbf{p}}\mu_{\theta}-\hat{z}\|$;
 $\theta_{k+1} = \theta_k -  \tau \nabla H_1(\theta)$\;}
\end{algorithm2e}
For Algorithm~\ref{alg:meth2}, we remark that at a given step the important fact is that we have two independent random grids. Hence we just generate one grid at each iteration and use the grid from the previous iteration to generate our descent direction. Morevover the descent direction is easily implemented with automatic differentiation by remarking that
\begin{equation}
D_{\mathbf{p},\mathbf{q}} = \nabla_\theta \langle  B_\mathbf{p}\mu_{\theta}(\mathbf{p}),l_\mathbf{q} \rangle
\end{equation}
where $l_\mathbf{q} = B_\mathbf{q}
 \alpha_p \mu_{\theta}(\mathbf{q}) -\hat{z}$ is fixed. This leads to  Algorithm \ref{alg:meth2_pract} (which is the practical implementation of Algorithm~\ref{alg:meth2}). For this algorithm the previous automatic normalization of the gradient has a tendency to fall in a local minimum (clipping effect due to the fact that $\mu_\theta(p)$ is bounded in $[0,1]$), however manually setting this parameter yields good results.

\begin{algorithm2e}[ht]\label{alg:meth2_pract}
\caption{Practical implementation of Unbiased CL-SGD Algorithm~\ref{alg:meth2}}
\KwData{Sketch $\hat{z}$, $K$ iterations, $\tau$ step size, normalization $\alpha$}
 $z_0 \gets \hat{z}$\;
 $\theta_0$ randomly initialized\;
 Generate a grid $\mathbf{q}$ of $P$ points $q_i \sim \mathcal{U}(\mathcal{D})$\;
 \For{$k = 0,..., K$}{
 Generate a grid $\mathbf{p}$ of $P$ points $p_i \sim \mathcal{U}(\mathcal{D})$\;
  $l_\mathbf{q}=B_\mathbf{q}
 \alpha \mu_{\theta}(\mathbf{q}) -\hat{z}$\;
  Set $H_2(\theta)= \re \langle  B_\mathbf{p}\alpha  \mu_{\theta}(\mathbf{p}),l_\mathbf{q} \rangle$\;
 $\theta_{k+1}= \theta_k -\tau \nabla H_2(\theta)$\;
 $\mathbf{q}=\mathbf{p}$\;}
\end{algorithm2e}

Both algorithms update the DNN with information synthetized from the \emph{whole database at each iteration}. The computational cost of each iteration is similar to the cost of computing the gradient of a typical $\ell^2$ loss with a dataset of size $m$ (size of the sketch instead of size of the dataset). The advantage of this method is that the number of iterations required to converge to the sketch matching problem does not depend on the size of the original database compared to traditional learning with batches where passes (epochs) through the whole database are required.

\subsection{Consistency of CL-SGD with the sketch matching problem}\label{sec:consistency}
In this section, we give the necessary lemmas to link our stochastic descent directions with the original sketch matching problem (summarized by the Lemmas of Section~\ref{sec:pres_clsgd}). The central idea  of our method is that we can generally approximate $S\mu_\theta $ with $B_\mathbf{p}\mu_\theta(\mathbf{p})$, which will translate to the chosen stochastic gradients.

\begin{lemma}\label{lem:consistency}
Consider  $\mathcal{S}$ constructed with frequencies $(\omega_l)_{l=1}^m$.  Let $B_\mathbf{p} \in \mathbb{C}^{m\times P}$ with general term $B_{\mathbf{p},l,i} = \frac{e^{-j \langle\omega_l,p_i \rangle}}{P}$ and $\mu \in \mathcal{M}(\mathcal{D})$. Then
  \begin{equation}
   \esp_\mathbf{p}  \left( B_\mathbf{p}\mu(\mathbf{p})\right)=S\mu.
  \end{equation}
\end{lemma}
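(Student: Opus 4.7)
The plan is to prove the identity componentwise by a direct calculation, exploiting the i.i.d.\ structure of the random grid and the definition of the uniform law on $\mathcal{D}$. Write out the $l$-th coordinate
\begin{equation*}
(B_\mathbf{p}\mu(\mathbf{p}))_l = \frac{1}{P}\sum_{i=1}^{P} e^{-j\langle \omega_l, p_i\rangle}\,\mu(p_i),
\end{equation*}
apply linearity of expectation, and then use the fact that the $p_i$ are i.i.d.\ so that every summand has the same expectation. This reduces the whole question to evaluating $\esp_{p \sim \mathcal{U}(\mathcal{D})}[e^{-j\langle \omega_l, p\rangle}\mu(p)]$ for a single uniform sample $p$.

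Next, I would spell out this single expectation as a Lebesgue integral against the density of $\mathcal{U}(\mathcal{D})$ and, using that $\mu$ is supported in $\mathcal{D}$, identify it with the definition of the sketch
\begin{equation*}
(\mathcal{S}\mu)_l = \int_{\mathbb{R}^d} e^{-j\langle \omega_l, x\rangle}\mu(x)\,dx.
\end{equation*}
The $P$ identical terms each multiplied by $1/P$ then collapse to a single copy of $(\mathcal{S}\mu)_l$, yielding the claimed equality $\esp_\mathbf{p}(B_\mathbf{p}\mu(\mathbf{p})) = \mathcal{S}\mu$. The argument is really just Fubini plus the definition of the uniform expectation, carried out coordinate by coordinate for $l = 1, \ldots, m$.

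The only delicate point, and the one I would be most careful with, is the volume normalization: the density of $\mathcal{U}(\mathcal{D})$ is $1/|\mathcal{D}|$, so matching $\esp_{p}[e^{-j\langle \omega_l, p\rangle}\mu(p)] = |\mathcal{D}|^{-1}\int_\mathcal{D} e^{-j\langle \omega_l, x\rangle}\mu(x)\,dx$ with $(\mathcal{S}\mu)_l$ requires the convention $|\mathcal{D}|=1$ (or an implicit absorption of $|\mathcal{D}|$ into the definition of $B_\mathbf{p}$). I would state this normalization convention at the start of the proof and then the rest is a two-line computation; no further analytic ingredient (not even integrability beyond $\mu \in \mathcal{M}(\mathcal{D})$ and the boundedness of $e^{-j\langle \omega_l, \cdot\rangle}$) is required.
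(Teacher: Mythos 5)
Your proof is correct and takes essentially the same route as the paper's: componentwise expansion of $(B_\mathbf{p}\mu(\mathbf{p}))_l$, linearity of expectation, the i.i.d.\ reduction to a single uniform sample, and identification of that single expectation with the integral defining $(\mathcal{S}\mu)_l$. Your care about the volume normalization is well placed: the paper's own proof silently writes $\esp_{\mathbf{p}} \left[e^{-j\langle\omega_l,p_1\rangle}\mu(p_1)\right] = \int_{\mathcal{D}} e^{-j\langle\omega_l,x\rangle}\mu(x)\,dx$, i.e.\ it tacitly assumes $|\mathcal{D}|=1$ (or absorbs the constant, consistent with the proportionality signs used elsewhere in the paper), so your explicit convention only makes visible an assumption the paper leaves implicit.
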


\begin{proof}
The expectation yields for $l \in \{1,\ldots,m \}$
\begin{equation}
\begin{split}
 [\esp_\mathbf{p}(B_\mathbf{p}\mu(\mathbf{p}))]_l
  &=   \esp_\mathbf{p}  \sum_{r=1}^P \frac{e^{-j  \langle\omega_l,p_r\rangle} \mu(p_r) }{P}.\\
   \end{split}
\end{equation}

As the $p_r$ are i.i.d, we have

\begin{equation}
\begin{split}
[\esp_\mathbf{p}B_\mathbf{p}\mu(\mathbf{p})]_l &= P   \esp_\mathbf{p}  \frac{e^{-j  \langle\omega_l,p_1\rangle}  \mu(p_1) }{P} \\
&=   \int_{p_1 \in \mathcal{D}}   e^{-j  \langle\omega_l,p_1\rangle} \mu(p_1) d p_1=  [S\mu]_l.
 \end{split}
\end{equation}
\end{proof}
This shows that on average  random discretization of the data domain for the forward sketching operator is consistent  with the original sketch.

To calculate the expectation of our stochastic gradients, we provide the following Lemma which gives the expectation of the discretized cross product between two measures. We write $\langle \mu_1 , \mu_2 \rangle_{L^2(\mathcal{D})} :=   \int_{\mathcal{D}}\mu_1(x)\mu_2(x)dx $ the cross product between two densities $\mu_1$ and $\mu_2$.

\begin{lemma}\label{lem:consistency2}
Consider  $\mathcal{S}$ constructed with frequencies $(\omega_l)_{l=1}^m$.  Let $B_\mathbf{p} \in \mathbb{C}^{m\times P}$ with general term $B_{\mathbf{p},l,i} = \frac{e^{-j \langle\omega_l,p_i \rangle}}{P}$ and  $\mu_1,\mu_2 \in \mathcal{M}(\mathcal{D})$. We have

\begin{equation}
\begin{split}
 \esp_\mathbf{p} \left( \langle  B_\mathbf{p}\mu_1(\mathbf{p}),  B_\mathbf{p}\mu_2(\mathbf{p}) \rangle\right)  &= \frac{m}{P} \langle \mu_1 , \mu_2 \rangle_{L^2(\mathcal{D})}  \\
 &+\frac{P-1}{P}\langle \mathcal{S}\mu_1,  \mathcal{S}\mu_2 \rangle.\\
 \end{split}
\end{equation}

\end{lemma}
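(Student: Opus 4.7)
The plan is to mimic the diagonal/off-diagonal decomposition familiar from second-moment computations for i.i.d.\ sums, and to reuse the first-moment formula of Lemma~\ref{lem:consistency} on the off-diagonal terms.

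First, I expand the inner product coordinate-wise, using the Hermitian convention from earlier in the paper, as
\[
\langle B_\mathbf{p}\mu_1(\mathbf{p}), B_\mathbf{p}\mu_2(\mathbf{p})\rangle = \frac{1}{P^2}\sum_{l=1}^m \sum_{i=1}^P \sum_{r=1}^P e^{-j\langle\omega_l,\, p_r - p_i\rangle}\mu_1(p_i)\mu_2(p_r),
\]
and split the sum over $(i,r)$ into the diagonal contribution $i=r$ and the off-diagonal contribution $i\neq r$. Then I take the expectation over $\mathbf{p}$ separately in each piece.

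For the diagonal piece, the exponential collapses to $1$, and using the interpretation of $p_i\sim\mathcal{U}(\mathcal{D})$ adopted in the proof of Lemma~\ref{lem:consistency}, each of the $P$ terms satisfies $\esp[\mu_1(p_i)\mu_2(p_i)] = \langle\mu_1,\mu_2\rangle_{L^2(\mathcal{D})}$. Summing over the $m$ values of $l$ and the $P$ diagonal indices with the overall prefactor $1/P^2$ yields $\tfrac{m}{P}\langle\mu_1,\mu_2\rangle_{L^2(\mathcal{D})}$. For the off-diagonal piece, independence of $p_i$ and $p_r$ factors the expectation as $\esp[e^{j\langle\omega_l, p_i\rangle}\mu_1(p_i)]\cdot \esp[e^{-j\langle\omega_l, p_r\rangle}\mu_2(p_r)]$, and Lemma~\ref{lem:consistency} (applied once directly and once after complex conjugation) identifies these two factors with $\overline{[\mathcal{S}\mu_1]_l}$ and $[\mathcal{S}\mu_2]_l$. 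There are $P(P-1)$ off-diagonal pairs, so summing over $l$ reassembles the Hermitian inner product $\langle\mathcal{S}\mu_1,\mathcal{S}\mu_2\rangle$ with overall prefactor $\tfrac{P-1}{P}$. Adding the two contributions gives the stated formula.

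The computation is essentially routine; the only points requiring care are separating the diagonal and off-diagonal sums \emph{before} taking the expectation (otherwise the factored form of the off-diagonal expectation would double count the $i=r$ mass and the $m/P$ correction term would disappear) and tracking the conjugate on the first argument so that the off-diagonal sum indeed reassembles as $\langle\mathcal{S}\mu_1,\mathcal{S}\mu_2\rangle$ rather than a different bilinear form.
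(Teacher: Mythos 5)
Your proof is correct and follows essentially the same route as the paper: expand $\langle B_\mathbf{p}\mu_1(\mathbf{p}),B_\mathbf{p}\mu_2(\mathbf{p})\rangle$ as a triple sum, split into diagonal ($i=r$) and off-diagonal terms before taking expectation, identify the diagonal contribution with $\tfrac{m}{P}\langle\mu_1,\mu_2\rangle_{L^2(\mathcal{D})}$ and use independence (equivalently Lemma~\ref{lem:consistency} applied to each factor) to reassemble the off-diagonal contribution as $\tfrac{P-1}{P}\langle\mathcal{S}\mu_1,\mathcal{S}\mu_2\rangle$. The only difference is a harmless choice of which argument of the Hermitian inner product carries the conjugate, which does not affect the argument.
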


\begin{proof}
We have
\begin{equation}
\begin{split}
&\esp_\mathbf{p} \langle  B_\mathbf{p}\mu_1(\mathbf{p}),  B_\mathbf{p}\mu_2(\mathbf{p}) \rangle  \\
&=  \esp_\mathbf{p}  (\mu_2(\mathbf{p})^T B_\mathbf{p} ^* B_\mathbf{p}\mu_1(\mathbf{p}) )\\
&=\frac{1}{P^2} \esp_\mathbf{p} \sum_{t=1}^P \mu_2 (p_t)\sum_{g=1}^m e^{j\langle \omega_g ,p_t\rangle} \sum_{r=1}^P e^{-j  \langle\omega_g,p_r\rangle} \mu_1(p_r)\\
&=\frac{1}{P^2}  \sum_{t=1}^P \sum_{g=1}^m\sum_{r=1}^P  \esp_\mathbf{p} e^{j\langle \omega_g ,p_t-p_r\rangle}\mu_2 (p_t)\mu_1(p_r). \\
 \end{split}
\end{equation}
The diagonal terms in the sum $p_t = p_r$ are
\begin{equation}
\begin{split}
D &= \frac{1}{P^2}  \sum_{t=1}^P \sum_{g=1}^m \esp_\mathbf{p} \mu_2 (p_t)\mu_1(p_t)= \frac{m}{P}   \langle \mu_1 , \mu_2 \rangle_{L^2(\mathcal{D})}. \\
 \end{split}
\end{equation}

The non diagonal terms $p_t \neq p_r$ give (with the fact that the $p_i$ are i.i.d.):

\begin{equation}
\begin{split}
N &=\frac{1}{P^2}  \sum_{t=1}^P \sum_{g=1}^m\sum_{r=1, r\neq t}^P  \esp_\mathbf{p} e^{j\langle \omega_g ,p_t-p_r\rangle}\mu_2 (p_t)\mu_1(p_r) \\
&=\frac{P-1}{P}  \sum_{g=1}^m  \left( \esp_\mathbf{p} e^{j\langle \omega_g ,p_1\rangle}\mu_2 (p_1)\right) \left(  \esp_\mathbf{p}e^{-j  \langle\omega_g,p_1\rangle} \mu_1(p_1)\right) \\
&=\frac{P-1}{P}  \sum_{g=1}^m  (\mathcal{S}\mu_2)_g^*(\mathcal{S}\mu_1)_g  \\
&= \frac{P-1}{P}\langle \mathcal{S}\mu_1,  \mathcal{S}\mu_2 \rangle.
 \end{split}
\end{equation}
\end{proof}
We also calculate the variance of the unbiased estimator of the gradient of $G$ thanks to the following Lemma.

\begin{lemma}\label{lem:variance}

Consider  $\mathcal{S}$ constructed with frequencies $(\omega_l)_{l=1}^m$.  Let $B_\mathbf{p} \in \mathbb{C}^{m\times P}$ with general term $B_{p,l,i} = \frac{e^{-j \langle\omega_l,p_i \rangle}}{P}$ and  $\mu_1,\mu_2 \in \mathcal{M}(\mathcal{D})$. We have

\begin{equation}
\begin{split}
&\esp_{\mathbf{p},\mathbf{q}} \vert \langle  B_\mathbf{p}\mu_1(\mathbf{p}),  B_\mathbf{q}\mu_2(\mathbf{q}) -z \rangle \vert^2 \\
&\quad\quad\quad- \vert\esp_{\mathbf{p},\mathbf{q}}   \langle  B_\mathbf{p}\mu_1(\mathbf{p}),  B_\mathbf{q}\mu_2(\mathbf{q}) -z \rangle \vert^2 \\
&= \frac{1}{P^2} \langle \vert\mu_1\vert^2,\vert\mu_2 \vert^2\rangle_{L^2,\vert\mathcal{S}^* \mathbf{1} \vert^2} +  \frac{1}{P} C(\mu_1,\mu_2,z)
 \end{split}
\end{equation}
where we define $\mathcal{S}^*z: p \to  \sum_g z_ge^{j \langle \omega_g,p \rangle}$, and for a kernel $K$ (a function from $\mathcal{D}$ to $\mathbb{R}$), $\langle \nu_1, \nu_2 \rangle_{L^2(\mathcal{D}),K} := \int_{x,y} \nu_1(x)\nu_2(y)h(x-y) dxdy$. and where
\begin{equation}
\begin{split}
 &C(\mu_1,\mu_2,z)\\
 &:=  \frac{P-1}{P} ( \langle  \vert\mathcal{S}^*\mathcal{S}\mu_1 \vert^2, \vert\mu_2 \vert^2\rangle_{L^2} +   \langle  \vert\mathcal{S}^*\mathcal{S}\mu_2 \vert^2, \vert\mu_1 \vert^2\rangle_{L^2}) \\
&+  \re \langle \vert\mathcal{S}^*z\vert^2 -2\mathcal{S}^*z(\mathcal{S}^*\mathcal{S}\mu_2)^*  , \vert\mu_1 \vert^2\rangle_{L^2}\\
&+ 2\re\left(\langle \mathcal{S}\mu_1,z \rangle \langle \mathcal{S}\mu_1,\mathcal{S}\mu_2 \rangle ^*\right)-  \vert \langle \mathcal{S}\mu_1,z \rangle \vert^2 \\
&- \frac{2P-1}{P} \vert \langle \mathcal{S}\mu_1,\mathcal{S}\mu_2\rangle \vert^2
\end{split}
\end{equation}

\end{lemma}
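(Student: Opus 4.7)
My plan is to prove the lemma by direct computation of $\text{Var}(X) = \esp|X|^2 - |\esp X|^2$ where $X := Y - Z$ with $Y := \langle B_\mathbf{p}\mu_1(\mathbf{p}), B_\mathbf{q}\mu_2(\mathbf{q})\rangle$ and $Z := \langle B_\mathbf{p}\mu_1(\mathbf{p}), z\rangle$. The expectation $\esp X = \langle\mathcal{S}\mu_1, \mathcal{S}\mu_2 - z\rangle$ follows immediately from Lemma~\ref{lem:consistency} together with the independence of $\mathbf{p}$ and $\mathbf{q}$. Expanding $|X|^2 = |Y|^2 - 2\re(Y\overline{Z}) + |Z|^2$ then reduces the problem to evaluating three separate expectations.

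Using the definition $(\mathcal{S}^*w)(p) := \sum_g w_g e^{j\langle\omega_g, p\rangle}$, I would first rewrite
\begin{equation*}
Y = \frac{1}{P^2}\sum_{r,t}\mu_1(p_r)\mu_2(q_t)(\mathcal{S}^*\mathbf{1})(p_r-q_t), \quad Z = \frac{1}{P}\sum_r \mu_1(p_r)(\mathcal{S}^*z)(p_r),
\end{equation*}
so that $\esp|Y|^2$, $\esp(Y\overline{Z})$ and $\esp|Z|^2$ become sums over tuples of $\mathbf{p}, \mathbf{q}$ indices. For each such sum I partition the tuples by their coincidence pattern (for $\esp|Y|^2$, the four cases according to whether $r = r'$ and whether $t = t'$; two cases each for $\esp(Y\overline{Z})$ and $\esp|Z|^2$, determined by whether $r = r'$) and use the i.i.d.\ structure of the samples. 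When two indices coincide, the summand collapses to an $L^2$ integral involving the kernel $|\mathcal{S}^*\mathbf{1}|^2$ or factors of $\mathcal{S}^*\mathcal{S}\mu_i$ (the latter via the key identity $\esp_q[\mu_2(q)(\mathcal{S}^*\mathbf{1})(p-q)] = (\mathcal{S}^*\mathcal{S}\mu_2)(p)$), whereas when indices differ the expectation factorises and produces terms of the form $\langle\mathcal{S}\mu_i,\cdot\rangle$.

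The leading $1/P^2$ contribution arises exclusively from the fully diagonal case $r=r'$, $t=t'$ in $\esp|Y|^2$, which yields precisely $\frac{1}{P^2}\langle|\mu_1|^2, |\mu_2|^2\rangle_{L^2, |\mathcal{S}^*\mathbf{1}|^2}$. All other diagonal contributions are of order $1/P$ and feed into $C(\mu_1, \mu_2, z)$. Subtracting $|\esp X|^2 = |\langle\mathcal{S}\mu_1,\mathcal{S}\mu_2\rangle|^2 - 2\re(\langle\mathcal{S}\mu_1,\mathcal{S}\mu_2\rangle\overline{\langle\mathcal{S}\mu_1,z\rangle}) + |\langle\mathcal{S}\mu_1,z\rangle|^2$ eliminates the fully off-diagonal pieces up to $O(1/P)$ residues, via elementary identities such as $\frac{(P-1)^2}{P^2} - 1 = -\frac{2P-1}{P^2}$ and $\frac{P-1}{P} - 1 = -\frac{1}{P}$, which are precisely the sources of the coefficients $-\frac{2P-1}{P}$ and $-1$ that appear inside $C$.

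The main obstacle is coefficient bookkeeping: each case in each of the three expansions carries a distinct prefactor in $P$, and each of the six terms of $C(\mu_1,\mu_2,z)$ must be traced to the correct combination of a diagonal integral, an ordered-tuple count such as $P(P-1)$, and (for three of them) a cancellation against the corresponding piece of $|\esp X|^2$. Once the prefactors are verified, simple conjugation identities such as $\re(\mathcal{S}^*\mathcal{S}\mu_2\,\overline{\mathcal{S}^*z}) = \re(\mathcal{S}^*z\,(\mathcal{S}^*\mathcal{S}\mu_2)^*)$ put the expression into the form claimed.
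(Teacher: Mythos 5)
Your proposal follows essentially the same route as the paper's proof: expand the variance of $X=\langle B_\mathbf{p}\mu_1(\mathbf{p}), B_\mathbf{q}\mu_2(\mathbf{q})-z\rangle$ into the three second moments, exploit independence and the i.i.d.\ samples through a diagonal/off-diagonal (coincidence-pattern) decomposition expressed via $\mathcal{S}^*$, identify the fully diagonal part of $\esp|Y|^2$ as the $\tfrac{1}{P^2}$ kernel term, and let the off-diagonal parts cancel against $|\esp X|^2$ through exactly the identities $\tfrac{(P-1)^2}{P^2}-1=-\tfrac{2P-1}{P^2}$ and $\tfrac{P-1}{P}-1=-\tfrac1P$. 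The only difference is organizational (you partition the cross term by indices where the paper first integrates out $\mathbf{q}$ and reuses a general bilinear formula with $y=\mathcal{S}\mu_2$), so the plan is correct and matches the paper.
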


\begin{proof}

We need to calculate a few terms separately. For $y\in\mathbb{C}^m$, using the fact that
 \[\langle  B_\mathbf{p}\mu_1(\mathbf{p}),  z\rangle  \\
=\sum_{g=1}^m\sum_{t=1}^P   e^{j\langle \omega_g ,p_t\rangle}\mu_1 (p_t)z_g,\] we have
\begin{equation}
\begin{split}
&\esp_{\mathbf{p}}  \langle  B_\mathbf{p}\mu_1(\mathbf{p}),  z\rangle  \langle  B_\mathbf{p}\mu_1(\mathbf{p}),  y\rangle^* \\
&=\frac{1}{P^2}  \sum_{g,t} \sum_{\tilde{g},\tilde{t}}  \esp_{\mathbf{p}} \Big(e^{j\langle \omega_g ,p_t\rangle-j\langle \omega_{\tilde{g}} ,p_{\tilde{t}}\rangle} \mu_1 (p_t) \mu_1 (p_{\tilde{t}})z_g y_{\tilde{g}}^*\Big) \\
&=\frac{1}{P^2}  \sum_{g,\tilde{g}} z_g y_{\tilde{g}}^* \sum_{t,\tilde{t}} \esp_{\mathbf{p}}\Big(  e^{j\langle \omega_g ,p_t\rangle-j\langle \omega_{\tilde{g}} ,p_{\tilde{t}}\rangle}\mu_1 (p_t) \mu_1 (p_{\tilde{t}})\Big). \\
 \end{split}
\end{equation}
As the $p_t$ are i.i.d., we have
\begin{equation}
\begin{split}
 &\sum_{t,\tilde{t}}  \esp_{\mathbf{p}} e^{j\langle \omega_g ,p_t\rangle-j\langle \omega_{\tilde{g}} ,p_{\tilde{t}}\rangle}\mu_1 (p_t) \mu_1 (p_{\tilde{t}}) \quad\\
& \quad \quad\quad \quad=  P  \esp_{\mathbf{p}} [e^{-j\langle \omega_{\tilde{g}}-\omega_g ,p_1\rangle} \vert\mu_1 (p_1)\vert^2]\\
 &\quad \quad\quad \quad+ P(P-1) (\mathcal{S}\mu_1)_g^*(\mathcal{S}\mu_1)_{\tilde{g}}.\\
\end{split}
\end{equation}

We obtain

\begin{equation}\label{eq:proof_var00}
\begin{split}
&\esp_{\mathbf{p}}  \langle  B_\mathbf{p}\mu_1(\mathbf{p}),  z\rangle  \langle  B_\mathbf{p}\mu_1(\mathbf{p}),  y\rangle^*\\
&=  \frac{1}{P^2}  \sum_{g,\tilde{g}} \Big( z_g y_{\tilde{g}}^*  P \esp_{\mathbf{p}} [e^{-j\langle \omega_{\tilde{g}}-\omega_g ,p_{1}\rangle} \vert\mu_1 (p_1)\vert^2] \\
&+ P(P-1) (\mathcal{S}\mu_1)_g^*(\mathcal{S}\mu_1)_{\tilde{g}}z_g y_{\tilde{g}}^*  \Big)\\
&=   \frac{1}{P}\esp_{\mathbf{p}}   [ B_\mathbf{p}^*z  (B_\mathbf{p}^*y)^*] (p_1) \vert^2 \vert\mu_1 (p_1)\vert^2]\\
&+\frac{P-1}{P}   \langle \mathcal{S}\mu_1,z \rangle \langle \mathcal{S}\mu_1,y \rangle ^*\\
&=   \frac{1}{P}\langle \mathcal{S}^*z(\mathcal{S}^*y)^*  , \vert\mu_1 \vert^2\rangle_{L^2}+\frac{P-1}{P}   \langle \mathcal{S}\mu_1,z \rangle  \langle \mathcal{S}\mu_1,y \rangle ^*\\
 \end{split}
\end{equation}
where $\mathcal{S}^*z$ is defined in the hypotheses of the Lemma. For $y= z$ we obtain

\begin{equation}
\begin{split}
&\esp_{\mathbf{p}} \vert \langle  B_\mathbf{p}\mu_1(\mathbf{p}),  z \rangle \vert^2 \\
&=   \frac{1}{P}\langle \vert\mathcal{S}^*z\vert^2  , \vert\mu_1 \vert^2\rangle_{L^2}+\frac{P-1}{P}  \vert \langle \mathcal{S}\mu_1,z \rangle \vert^2\\
 \end{split}
\end{equation}

We now calculate the following expectation:

\begin{equation}
\begin{split}
&\esp_{\mathbf{p},\mathbf{q}} \vert \langle  B_\mathbf{p}\mu_1(\mathbf{p}),  B_\mathbf{q}\mu_2(\mathbf{q}) \rangle \vert^2 \\
&=\frac{1}{P^4}  \esp_{\mathbf{p},\mathbf{q}} \vert\sum_{g=1}^m\sum_{t=1} \sum_{r=1}   e^{j\langle \omega_g ,p_t-q_r\rangle}\mu_1 (p_t)\mu_2(q_r) \vert^2\\
&=\frac{1}{P^4}  \sum_{g,t,r} \sum_{\tilde{g},\tilde{t},\tilde{r}}  \esp_{\mathbf{p},\mathbf{q}}\Big( e^{j\langle \omega_g ,p_t-q_r\rangle}e^{-j\langle \omega_{\tilde{g}} ,p_{\tilde{t}}-q_{\tilde{r}}\rangle}\\
&\quad\; \mu_1 (p_t)\mu_2(q_r) \mu_1 (p_{\tilde{t}})\mu_2(q_{\tilde{r}})\Big)\\
\end{split}
\end{equation}
As $\mathbf{p}$ and $\mathbf{q}$ are i.i.d.,
\begin{equation}\label{eq:proof_var0}
\begin{split}
&\esp_{\mathbf{p},\mathbf{q}} \vert \langle  B_\mathbf{p}\mu_1(\mathbf{p}),  B_\mathbf{q}\mu_2(\mathbf{q}) \rangle \vert^2 \\
&=\frac{1}{P^4}  \sum_{g,t,r} \sum_{\tilde{g},\tilde{t},\tilde{r}} \Big( \esp_{\mathbf{p}} e^{j\langle \omega_g ,p_t\rangle-j\langle \omega_{\tilde{g}} ,p_{\tilde{t}}\rangle} \mu_1 (p_t) \mu_1 (p_{\tilde{t}})\Big) \\
& \quad\;\esp_{\mathbf{q}}\Big(e^{-j\langle \omega_g ,q_r\rangle+j\langle \omega_{\tilde{g}} ,q_{\tilde{r}}\rangle} \mu_2(q_r) \mu_2(q_{\tilde{r}})\Big)\\
&=\frac{1}{P^4}  \sum_{g,\tilde{g}} \sum_{t,\tilde{t}}  \esp_{\mathbf{p}} e^{j\langle \omega_g ,p_t\rangle-j\langle \omega_{\tilde{g}} ,p_{\tilde{t}}\rangle} \mu_1 (p_t) \mu_1 (p_{\tilde{t}}) \\
&  \quad\;\sum_{r,\tilde{r}}\esp_{\mathbf{q}}e^{-j\langle \omega_g ,q_r\rangle+j\langle \omega_{\tilde{g}} ,q_{\tilde{r}}\rangle} \mu_2(q_r) \mu_2(q_{\tilde{r}})\\
&=  \frac{1}{P^4} \sum_{g,\tilde{g}} A_{1,g,\tilde{g}} A_{2,g,\tilde{g}}^* \\
 \end{split}
\end{equation}
where, with the decomposition of the sum into diagonal and off-diagonal terms,
\begin{equation}
\begin{split}
A_{i,g,\tilde{g}} &= \sum_{t,\tilde{t}}  \esp_{\mathbf{p}} e^{j\langle \omega_g ,p_t\rangle-j\langle \omega_{\tilde{g}} ,p_{\tilde{t}}\rangle} \mu_i (p_t) \mu_i (p_{\tilde{t}})\\
 &= P  \esp_{\mathbf{p}} e^{-j\langle \omega_{\tilde{g}}-\omega_g ,p_{t}\rangle} \vert\mu_i (p_t)\vert^2\\
 &\quad\;+ P(P-1) (\mathcal{S}\mu_i)_g^*(\mathcal{S}\mu_i)_{\tilde{g}}.\\
\end{split}
\end{equation}

We obtain

\begin{equation}\label{eq:proof_var1}
\begin{split}
&P^2\esp_{\mathbf{p},\mathbf{q}} \vert \langle  B_\mathbf{p}\mu_1(\mathbf{p}),  B_\mathbf{q}\mu_2(\mathbf{q}) \rangle \vert^2 \\
&=    \sum_{g,\tilde{g}} ( \esp_{\mathbf{p}} e^{-j\langle \omega_{\tilde{g}}-\omega_g ,p_{t}\rangle}  \vert\mu_1 (p_t)\vert^2  \esp_{\mathbf{q}} e^{j\langle \omega_{\tilde{g}}-\omega_g ,q_{r}\rangle} \vert\mu_2(q_r)\vert^2\\
& +  (P-1) (\mathcal{S}\mu_1)_g^*(\mathcal{S}\mu_1)_{\tilde{g}} \esp_{\mathbf{q}}  e^{j\langle \omega_{\tilde{g}}-\omega_g ,q_{r}\rangle} \vert\mu_2(q_r)\vert^2 \\
&+ (P-1) (\mathcal{S}\mu_2)_g(\mathcal{S}\mu_2)_{\tilde{g}}^* \esp_{\mathbf{p}}  e^{-j\langle \omega_{\tilde{g}}-\omega_g ,p_{t}\rangle}  \vert\mu_1 (p_t)\vert^2 \\
&+(P-1)^2 (\mathcal{S}\mu_1)_g^*(\mathcal{S}\mu_1)_{\tilde{g}} (\mathcal{S}\mu_2)_g(\mathcal{S}\mu_2)_{\tilde{g}}^*
 \end{split}
\end{equation}

with

\begin{equation}
\begin{split}
& \sum_{g,\tilde{g}} ( \esp_{\mathbf{p}} e^{-j\langle \omega_{\tilde{g}}-\omega_g ,p_{t}\rangle}  \vert\mu_1 (p_t)\vert^2  \esp_{\mathbf{q}} e^{j\langle \omega_{\tilde{g}}-\omega_g ,q_{r}\rangle} \vert\mu_2(q_r)\vert^2\\
&  =\esp_{\mathbf{p}} \esp_{\mathbf{q}}  \left(\sum_{g,\tilde{g}}e^{-j\langle \omega_{\tilde{g}}-\omega_g ,p_{t}\rangle}    e^{j\langle \omega_{\tilde{g}}-\omega_g ,q_{r}\rangle} \right) \vert\mu_1 (p_t)\vert^2\vert\mu_2(q_r)\vert^2\\
 \end{split}
\end{equation}

We have inside the expectation,
\begin{equation}
\begin{split}
&   \left(\sum_{g,\tilde{g}}e^{j\langle \omega_{\tilde{g}},q_r-p_t\rangle}    e^{-j\langle \omega_g ,q_{r}-p_t\rangle} \right) \vert\mu_1 (p_t)\vert^2\vert\mu_2(q_r)\vert^2\\
&  = \left(\sum_{\tilde{g}}e^{j\langle \omega_{\tilde{g}},q_r-p_t\rangle}    \sum_{g}e^{-j\langle \omega_g ,q_{r}-p_t\rangle} \right) \vert\mu_1 (p_t)\vert^2\vert\mu_2(q_r)\vert^2\\
&  = \left(\sum_{\tilde{g}}e^{j\langle \omega_{\tilde{g}},q_r-p_t\rangle}    \sum_{g}e^{-j\langle \omega_g ,q_{r}-p_t\rangle} \right) \vert\mu_1 (p_t)\vert^2\vert\mu_2(q_r)\vert^2\\
\end{split}
\end{equation}
This gives
\begin{equation}\label{eq:proof_var2}
\begin{split}
&P^2\esp_{\mathbf{p},\mathbf{q}} \vert \langle  B_\mathbf{p}\mu_1(\mathbf{p}),  B_\mathbf{q}\mu_2(\mathbf{q}) \rangle \vert^2\\
& = \esp_{\mathbf{p},\mathbf{q}}  \vert[\mathcal{S}^* \mathbf{1}] (q_r-p_t)\vert^2 \vert\mu_1(p_t)\vert^2\vert\mu_2(q_r) \vert^2\\
& = \langle \vert\mu_1\vert^2,\vert\mu_2 \vert^2\rangle_{L^2(\mathcal{D}),\vert\mathcal{S}^* \mathbf{1} \vert^2}.
 \end{split}
\end{equation}
where we define for a kernel $K$ (a function from $\mathcal{D}$ to $\mathbb{R}$), $\langle \nu_1, \nu_2 \rangle_{L^2(\mathcal{D}),K} = \int_{x,y} \nu_1(x)\nu_2(y)h(x-y) dxdy$.

We calculate the second term (and similarly the third term) of the right hand side of~\eqref{eq:proof_var1}.
\begin{equation}\label{eq:proof_var3}
\begin{split}
  &\sum_{g,\tilde{g}} (\mathcal{S}\mu_1)_g^*(\mathcal{S}\mu_1)_{\tilde{g}} \esp_{\mathbf{q}}  e^{j\langle \omega_{\tilde{g}}-\omega_g ,q_{r}\rangle} \vert\mu_2(q_r)\vert^2\\
  &\esp_{\mathbf{q}} \sum_{g,\tilde{g}} e^{-j\langle\omega_g ,q_{r}\rangle}(\mathcal{S}\mu_1)_g^*e^{j\langle \omega_{\tilde{g}} ,q_{r}\rangle}(\mathcal{S}\mu_1)_{\tilde{g}}   \vert\mu_2(q_r)\vert^2\\
  &=  \esp_{\mathbf{q}}  \left( \vert\mathcal{S}^*\mathcal{S}\mu_1 \vert^2_{q_r}\vert\mu_2(q_r)\vert^2\right)= \langle  \vert\mathcal{S}^*\mathcal{S}\mu_1 \vert^2, \vert\mu_2 \vert^2\rangle_{L^2(\mathcal{D})}.
  \end{split}
\end{equation}

The fourth term of the right hand side of~\eqref{eq:proof_var1} yields

\begin{equation}\label{eq:proof_var4}
\begin{split}
 &\sum_{g,\tilde{g}} (\mathcal{S}\mu_1)_g^*(\mathcal{S}\mu_1)_{\tilde{g}} (\mathcal{S}\mu_2)_g(\mathcal{S}\mu_2)_{\tilde{g}}^*  \\
 &=  \sum_g  (\mathcal{S}\mu_1)_g^* (\mathcal{S}\mu_2)_g\langle \mathcal{S}\mu_1, \mathcal{S}\mu_2\rangle = \vert\langle \mathcal{S}\mu_1, \mathcal{S}\mu_2\rangle \vert^2.\\
 \end{split}
\end{equation}

Going back to~\eqref{eq:proof_var0}, we have using the expressions~\eqref{eq:proof_var2},~\eqref{eq:proof_var3} and~\eqref{eq:proof_var4} in~\eqref{eq:proof_var1}

\begin{equation}\label{eq:proof_var5}
\begin{split}
&\esp_{\mathbf{p},\mathbf{q}} \vert \langle  B_\mathbf{p}\mu_1(\mathbf{p}),  B_\mathbf{q}\mu_2(\mathbf{q}) \rangle \vert^2\\
&=  \frac{1}{P^2} \langle \vert\mu_1\vert^2,\vert\mu_2 \vert^2\rangle_{L^2(\mathcal{D}),\vert\mathcal{S}^* \mathbf{1} \vert^2}\\
& +  \frac{P-1}{P^2} \langle  \vert\mathcal{S}^*\mathcal{S}\mu_1 \vert^2, \vert\mu_2 \vert^2\rangle_{L^2(\mathcal{D})} \\
&+  \frac{P-1}{P^2} \langle  \vert\mathcal{S}^*\mathcal{S}\mu_2 \vert^2, \vert\mu_1 \vert^2\rangle_{L^2(\mathcal{D})} \\
&+ \frac{(P-1)^2}{P^2} \vert\langle \mathcal{S}\mu_1, \mathcal{S}\mu_2\rangle \vert^2.
 \end{split}
\end{equation}

By developing expressions, we have

\begin{equation}
\begin{split}
&\esp_{\mathbf{p},\mathbf{q}} \vert \langle  B_\mathbf{p}\mu_1(\mathbf{p}),  B_\mathbf{q}\mu_2(\mathbf{q}) -z \rangle  \\
&\quad\quad\quad- \esp_{\mathbf{p},\mathbf{q}}   \langle  B_\mathbf{p}\mu_1(\mathbf{p}),  B_\mathbf{q}\mu_2(\mathbf{q}) -z \rangle \vert^2 \\
& = \esp_{\mathbf{p},\mathbf{q}} \vert \langle  B_\mathbf{p}\mu_1(\mathbf{p}),  B_\mathbf{q}\mu_2(\mathbf{q}) -z \rangle   \vert^2- \vert \langle\mathcal{S}\mu_1, \mathcal{S} \mu_2 - z \rangle \vert^2 \\
&=\esp_{\mathbf{p},\mathbf{q}} \vert \langle  B_\mathbf{p}\mu_1(\mathbf{p}),  B_\mathbf{q}\mu_2(\mathbf{q})\rangle\vert^2 \\
&- 2 \esp_{\mathbf{p},\mathbf{q}} \re\left( \langle  B_\mathbf{p}\mu_1(\mathbf{p}),  z \rangle \langle  B_\mathbf{p}\mu_1(\mathbf{p}),  B_\mathbf{q}\mu_2(\mathbf{q})\rangle^*\right)  \\
&+\esp_{\mathbf{p},\mathbf{q}} \vert\langle  B_\mathbf{p}\mu_1(\mathbf{p}),  z \rangle   \vert^2- \vert \langle\mathcal{S}\mu_1, \mathcal{S} \mu_2 - z \rangle \vert^2 \\
&=\esp_{\mathbf{p},\mathbf{q}} \vert \langle  B_\mathbf{p}\mu_1(\mathbf{p}),  B_\mathbf{q}\mu_2(\mathbf{q})\rangle\vert^2 \\
&- 2 \esp_{\mathbf{p},\mathbf{q}} \re\left( \langle  B_\mathbf{p}\mu_1(\mathbf{p}),  z \rangle \langle  B_\mathbf{p}\mu_1(\mathbf{p}),  \mathcal{S}\mu_2\rangle^*\right)  \\
&+\esp_{\mathbf{p},\mathbf{q}} \vert\langle  B_\mathbf{p}\mu_1(\mathbf{p}),  z \rangle   \vert^2- \vert \langle\mathcal{S}\mu_1, \mathcal{S} \mu_2 - z \rangle \vert^2 \\
\end{split}
\end{equation}

Using equation~\eqref{eq:proof_var5} and the fact that $ \esp_{\mathbf{p},\mathbf{q}} \re \langle  B_\mathbf{p}\mu_1(\mathbf{p}),  z \rangle=  \re \langle  \esp_{\mathbf{p},\mathbf{q}} B_\mathbf{p}\mu_1(\mathbf{p}),  z \rangle = \re \langle  \mathcal{S}\mu_1,  z \rangle$ with Lemma~\ref{lem:consistency}, equation~\eqref{eq:proof_var00}, we have

\begin{equation}
\begin{split}
&\esp_{\mathbf{p},\mathbf{q}} \vert \langle  B_\mathbf{p}\mu_1(\mathbf{p}),  B_\mathbf{q}\mu_2(\mathbf{q}) -z \rangle  \\
&\quad\quad\quad- \esp_{\mathbf{p},\mathbf{q}}   \langle  B_\mathbf{p}\mu_1(\mathbf{p}),  B_\mathbf{q}\mu_2(\mathbf{q}) -z \rangle \vert^2 \\
&= \frac{1}{P^2} \langle \vert\mu_1\vert^2,\vert\mu_2 \vert^2\rangle_{L^2(\mathcal{D}),\vert\mathcal{S}^* \mathbf{1} \vert^2}\\
& +  \frac{P-1}{P^2} \langle  \vert\mathcal{S}^*\mathcal{S}\mu_1 \vert^2, \vert\mu_2 \vert^2\rangle_{L^2(\mathcal{D})} \\
&+  \frac{P-1}{P^2} \langle  \vert\mathcal{S}^*\mathcal{S}\mu_2 \vert^2, \vert\mu_1 \vert^2\rangle_{L^2(\mathcal{D})} \\
&+ \frac{(P-1)^2}{P^2} \vert\langle \mathcal{S}\mu_1, \mathcal{S}\mu_2\rangle \vert^2\\
&-2\frac{1}{P} \re \langle (\mathcal{S}^*z)(\mathcal{S}^*\mathcal{S}\mu_2)^*  , \vert\mu_1 \vert^2\rangle_{L^2}\\
&-2 \frac{P-1}{P} \re  \left(\langle \mathcal{S}\mu_1,z \rangle  \langle \mathcal{S}\mu_1,\mathcal{S}\mu_2 \rangle ^*\right)\\
& + \frac{1}{P}\langle \vert\mathcal{S}^*z\vert^2  , \vert\mu_1 \vert^2\rangle_{L^2(\mathcal{D})}\\
&+\frac{P-1}{P}  \vert \langle \mathcal{S}\mu_1,z \rangle \vert^2\\
&-  \vert \langle \mathcal{S}\mu_1,z \rangle \vert^2 \\
&+ 2 \re\left(\langle \mathcal{S}\mu_1,z \rangle\langle \mathcal{S}\mu_1,\mathcal{S}\mu_2 \rangle^*\right)\\
&-  \vert \langle \mathcal{S}\mu_1,\mathcal{S}\mu_2\rangle \vert^2
 \end{split}
\end{equation}
Regrouping terms yields

\begin{equation}
\begin{split}
&\esp_{\mathbf{p},\mathbf{q}} \vert \langle  B_\mathbf{p}\mu_1(\mathbf{p}),  B_\mathbf{q}\mu_2(\mathbf{q}) -z \rangle  \\
&\quad\quad\quad- \esp_{\mathbf{p},\mathbf{q}}   \langle  B_\mathbf{p}\mu_1(\mathbf{p}),  B_\mathbf{q}\mu_2(\mathbf{q}) -z \rangle \vert^2 \\
&= \frac{1}{P^2} \langle \vert\mu_1\vert^2,\vert\mu_2 \vert^2\rangle_{L^2,\vert\mathcal{S}^* \mathbf{1} \vert^2}\\
& +  \frac{P-1}{P^2} \left(\langle  \vert\mathcal{S}^*\mathcal{S}\mu_1 \vert^2, \vert\mu_2 \vert^2\rangle_{L^2} + \langle  \vert\mathcal{S}^*\mathcal{S}\mu_2 \vert^2, \vert\mu_1 \vert^2\rangle_{L^2} \right)\\
&+ \frac{1}{P} \re \langle \vert\mathcal{S}^*z\vert^2 -2\mathcal{S}^*z(\mathcal{S}^*\mathcal{S}\mu_2)^*  , \vert\mu_1 \vert^2\rangle_{L^2}\\
&+ \frac{2}{P} \re\left(\langle \mathcal{S}\mu_1,z \rangle \langle \mathcal{S}\mu_1,\mathcal{S}\mu_2 \rangle ^*\right)\\
&-  \frac{1}{P}\vert \langle \mathcal{S}\mu_1,z \rangle \vert^2 \\
&- \frac{2P-1}{P^2} \vert \langle \mathcal{S}\mu_1,\mathcal{S}\mu_2\rangle \vert^2
 \end{split}
\end{equation}
\end{proof}
We have that the variance converges to $0$ at
the typical rate $1/P$.
%
%
\subsection{Convergence analysis}
The advantage of  Lemma~\ref{lem:gradient_meth2} showing the consistency in expectation of our CL-SGD update with the gradient of the ideal sketch matching problem is that we can use out of the box results of the convergence of SGD. Under some hypotheses on the stochastic descent direction, it is possible de to show that
\begin{equation}\label{eq:limit_gradient}
  \lim_{k \to \infty} \inf \esp \|\nabla G( \theta_k )\|_2^2 = 0
\end{equation}
i.e., in expectation, our estimate converges to a critical point of $G$ (a point $\theta$ such that $\nabla G(\theta) = 0$) which is the best we can hope for in this non-convex setting (with minimal hypotheses).

Specifically, we can use results in the non-convex setting from the review article \cite[Theorem 4.9]{bottou2018optimization}.  We suppose that $G$ has Lipschitz gradient, $G$ is lower bounded, that the expectation of our descent direction is exactly the gradient of $G$ and the variance is bounded by $M_1 + M_2 \|\nabla G(\theta)\|_2^2$ (which is verified with $M_2 =0$ as long as the $\mu_\theta$ are bounded in $L^2$ norm thanks to Lemma~\ref{lem:variance}). Then with a  sequence of diminishing step sizes  $\tau_k>0$ such that $\| \sum_{k} \tau_k\|= \infty$ and  $\| \sum_{k} \tau_k^2\| < \infty$ (tipically one may choose $\tau_k=\frac{1}{k}$), we have that the equation \eqref{eq:limit_gradient} is verified. With fixed step sizes the convergence results include the variance.

\section{Experimental results}\label{sec: 5}

\subsection{Synthetic data}
We first test our proposed approaches with 2-D synthetic data. The used training dataset is made of $n = 10^6$ samples which are generated from a spiral with a radius of circular curve from 0.3 to 1 and spiral length $2\pi$ (shown as (a) in Fig. \ref{fig_train_data}). The ReLU network $f_\theta$ with 3 hidden layers, each layer contains 64, 64, 128 neurons respectively. The dataset is compressed into a sketch of size $m=500$, i.e. with a compression ratio $r=2000$.

Figure \ref{fig_train_data} (b) shows the prior model learned using the initial method, i.e. the method from \cite{shi2023compressive} described in equation \eqref{eq_init_method}, with a sketch of size $m=500$, $P=900$ points uniformly generated on the grid of the data domain.
Figure \ref{fig_train_data} (c) and (d) shows the prior model learned with Algorithm~\ref{alg:meth1_pract} and~\ref{alg:meth2_pract}. We use $P = 1600$ points randomly generated on the data domain for algorithm ~\ref{alg:meth1_pract}. For  algorithm~\ref{alg:meth2_pract}, we use $P = 1000$ points and fix the value of $\alpha = 50$.
The proposed algorithms are capable of
recovering good approximations of the probability distribution of sample data while taking about half the training time (about 6 minutes) of the initial method.

\begin{figure}[h]
\begin{tabular}{cc}
\includegraphics[width=0.4\linewidth]{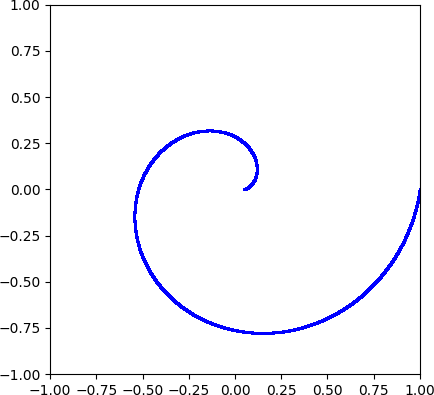}
&\includegraphics[width=0.45\linewidth]{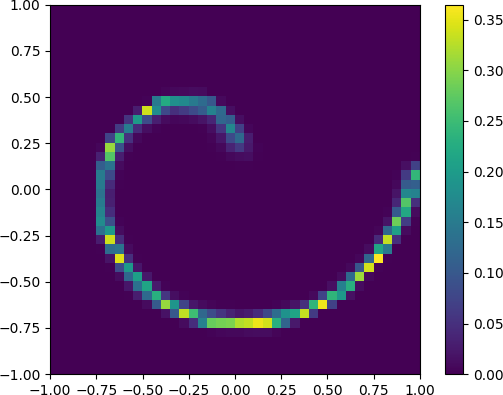} \\
\footnotesize{(a)} & \footnotesize{(b)}\\
\includegraphics[width=.45\linewidth]{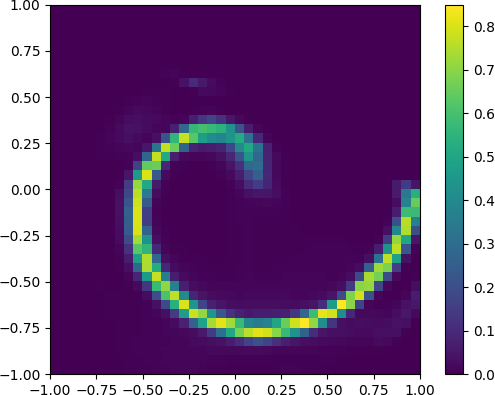}
&\includegraphics[width=.45\linewidth]{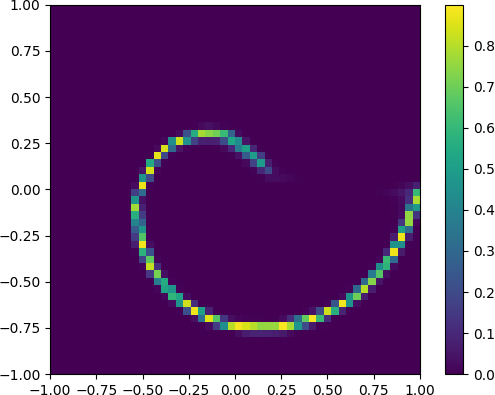}\\
\footnotesize{(c)} & \footnotesize{(d)}\\
\end{tabular}
\caption{Results for the learning of densities. (a) The training data. The prior model learned with (b) regular discretization (previous work \cite{shi2023compressive}), (c) Naïve CL-SGD Algorithm~\ref{alg:meth1_pract}  and (d) Unbiased SL-SGD Algorithm~\ref{alg:meth2_pract}.}
\label{fig_train_data}
\end{figure}

\paragraph{Denoising results}
Similar to \cite{shi2023compressive}, we apply the learned regularization term to solve the variational problem defined in equation \eqref{eq:denoising}. This results in the minimization of:
\begin{equation}
\begin{split}
G(u) 
&=\|u-v\|_2^2 + \lambda \| f_\theta (u) \| _2^2.
\end{split}
\end{equation}
Furthermore, we can conveniently compute the gradient using automatic differentiation. It is important to mention that this denoising approach can be seamlessly extended to address diverse linear inverse problems (beyond the scope of this article). It is  indeed now well known with plug and play approach that denoisers capture enough  information on the data for the solving of inverse problems.

The effectiveness of the learned regularizers is evaluated within the context of denoising white Gaussian noise. Specifically, the noisy dataset consists of 500 samples generated with a noise level of $\sigma^2=0.15$. We manually select the optimal hyperparameter values, including the gradient step size and the regularization parameter $\lambda$, for each individual model.

Figure \ref{fig_denoise_2d} visually illustrates the 2-D denoising results with different noise levels $\sigma^2 = 0.15, 0.2$. From top to bottom, the figure shows the denoising results using regularizers learned from the compressed dataset 2000 times smaller with the initial method \cite{shi2023compressive},  Algorithm~\ref{alg:meth1_pract} and~\ref{alg:meth2_pract} respectively. Table \ref{tab_snr} shows the average gain on SNR (Signal to Noise Ratio) of denoising results using different models.  We observe unbiased CL-SGD yields the best performance. However it must be noted that its parametrization is much harder than Naïve SGD  in practive.

\begin{figure}[h]
\begin{tabular}{cc}
\includegraphics[width=.45\linewidth]{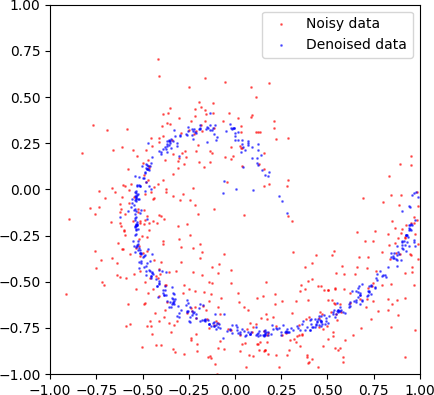} &\includegraphics[width=.45\linewidth]{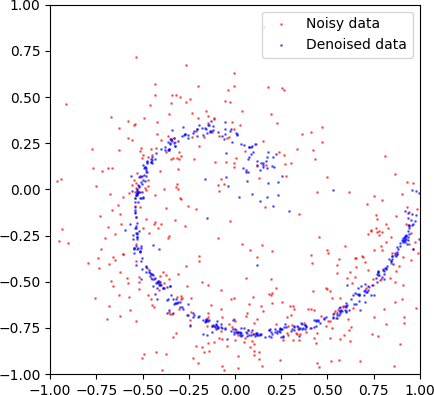}\\
\includegraphics[width=.45\linewidth]{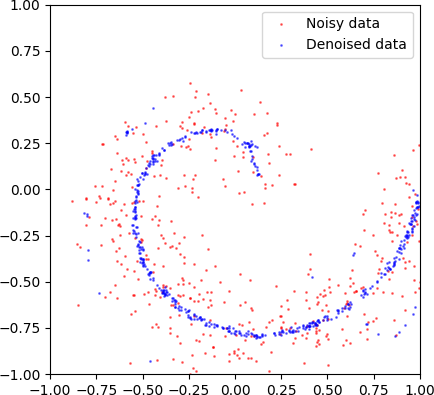}
&\includegraphics[width=.45\linewidth]{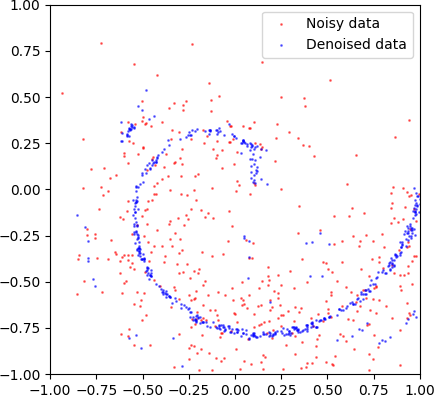}\\
\includegraphics[width=.45\linewidth]{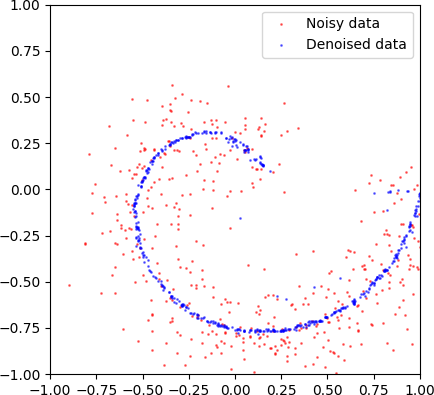} 
&\includegraphics[width=.45\linewidth]{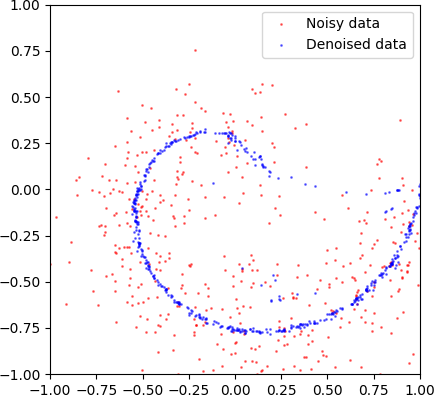}
\end{tabular}
\caption{Denoising results with noise level $\sigma = 0.15$ (left) and $\sigma = 0.2$ (right). Regularizers learned from 2000 times compressed dataset ($m=500$) with initial method  \cite{shi2023compressive}  (1st row), Algorithm~\ref{alg:meth1_pract} (2nd row) and Algorithm~\ref{alg:meth2_pract} (3rd row). }
\label{fig_denoise_2d}
\end{figure}

\begin{table}[h]
\centering
\caption{Average gain on SNR}
\label{tab_snr}
\begin{tabular}{|c|c|c|c|}
 \hline
Gain SNR& Inital method & Algo 3 & Algo 4\\
 \hline
 $\sigma^2 = 0.15$ & +2.065 & +1.954 & +2.89\\
  \hline
 $\sigma^2 = 0.2$ & +1.613 & +1.583& +2.45\\
  \hline
\end{tabular}
\end{table}

\subsection{Audio denoising}
To illustrate the advantages of the proposed methods in the same setting as \cite{shi2023compressive} on real data (as we were unable to deal with dimensions larger than 3 with the regular discretization of $\mathcal{S}$), we perform experiments on recorded musical notes (monophonic 16kHz audio snippets) from the NSynth dataset \cite{nsynth2017}.
To compare, we use the same compressed dataset as in the previous work. That is, the training dataset comprises 0.125 seconds of audio extracted from an acoustic guitar.
After filtering the normalized audio data $s$ by two 4th-order Butterworth low-pass filters $h_1$  and $h_2$ with a cutoff frequency of 1.5kHz and 3.75kHz, three frequency responses are constructed with $s_1 = h_1 * s$, $s_2 = h2 * (s - s_1)$, and $s_3 = s - s1- s2$. 
Then the frequency responses are concatenated, hence the training set is of dimension $2000 \times 3$; i.e. 2000 samples in dimension 3. 
The regularizer is learned from a sketch of size $m=200$, $i.e.$ the dataset is compressed by a factor of $30$.
Once the regularizer is learned, it is applied to denoise the audio that has been corrupted by  Gaussian white noise at noise level $\sigma^2 = 0.1$.

Figure \ref{fig_denoise_audio} and Figure~\ref{fig_denoise_audio2} demonstrate the audio denoising results. We gain 1.49dB on SNR with Algorithm~\ref{alg:meth1_pract} and 1.97dB with Algorithm~\ref{alg:meth2_pract} in the case of small noise ($\sigma^2 = 0.1$). Worse denoising results (gain 1.36dB) are achieved by using the initial approach \eqref{eq_init_method} in the previous work \cite{shi2023compressive}.
Compared to the initial method, which use $P=8000$ points to discretize the sketching operator, the results displayed in the figure is achieved with a model learned with $P = 4000$ points. Consequently, the new proposed algorithms exhibits enhanced efficiency (4 times faster) compared to the initial approach.

\begin{figure}[h]
\includegraphics[width=.45\linewidth]{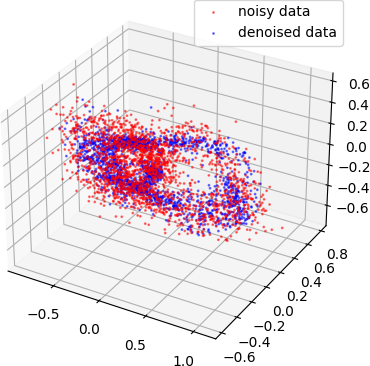} 
\includegraphics[width=.45\linewidth]{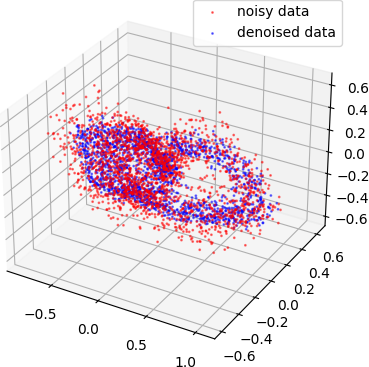}
\caption{Audio denoising results with regularizers learned from a 30 times compressed dataset with Algorithm 3 (left) and 4 (right). }
\label{fig_denoise_audio}
\end{figure}

\begin{figure}[h]
\includegraphics[width=1\linewidth]{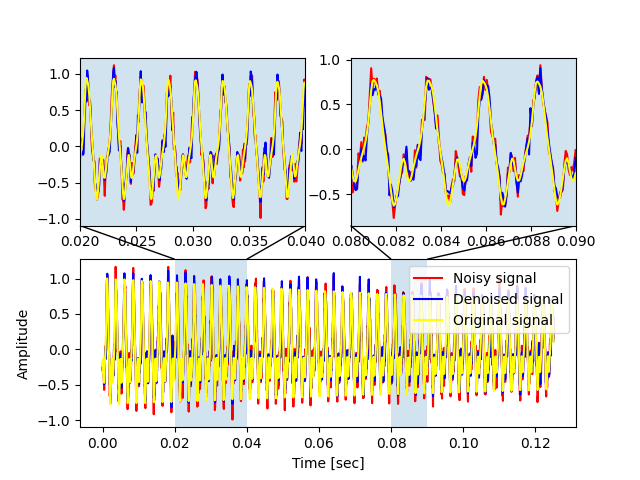} 
\includegraphics[width=1\linewidth]{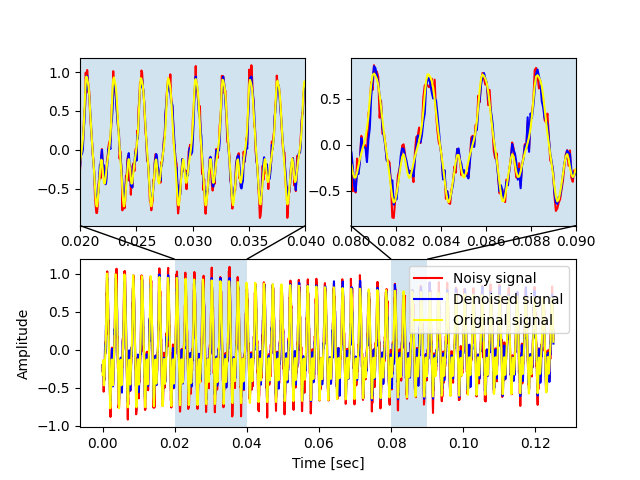}
\caption{Audio results with regularizers learned from a 30 times compressed dataset with Algorithm 3 (top) and 4 (bottom). }
\label{fig_denoise_audio2}
\end{figure}

\subsection{Image denoising results}
We show that our proposed method can be applied in the higher dimensional context of patch-based image denoising. The initial training data comprises $n=4\times 10^6$ patches of size $3 \times 3$ (i.e. $d=9$). These patches are compressed into a sketch of size $m = 10^4$. Figure \ref{fig_denoise_cameraman} shows test images (1st row) and their noisy version (2nd row) used in the experiment (with a noise level of $\sigma=0.07$.

It also shows the denoising result achieved by using a regularizer learned via Algorithm~\ref{alg:meth1_pract} (4th row). The regularizer is trained using a ReLU network consisting of 5 hidden layers, with respectively 64, 64, 128, 196, 196 neurons in each layer. We use $P=8 \times 10^4$ random points on the data domain and the Adam optimizer with a learning rate of $10^{-3} $. The learning process takes 5.7 hours on a machine with 2 * AMD EPYC 7452 32-Core Processor and 256 GB RAM. 

We compare the denoising results with that achieved with a GMM regularizer learned from the same sketch using the method from~\cite{shi2022sketching} (shown in the 3rd row in Figure \ref{fig_denoise_cameraman}). Note that only one iteration of the denoising method is used with an optimal choice of regularization parameter for all methods. Hence we observe directly the denoising effect of the two different regularizers. This comparison demonstrates that Algorithm~\ref{alg:meth1_pract} yields similar result as the GMM method, thus showing  that CL-SGD can be used successfully for the learning of a deep prior from a sketch in high dimension. We attribute the fact that the denoising performance  is not increased to the limited denoising possibilites on independent $3\times 3$ patches. Further work on accelerating our algorithms to manage bigger patches should show the improved representation capabilities of DNN.

\begin{figure}
\begin{tabular}{cc}
\includegraphics[width=0.5\linewidth]{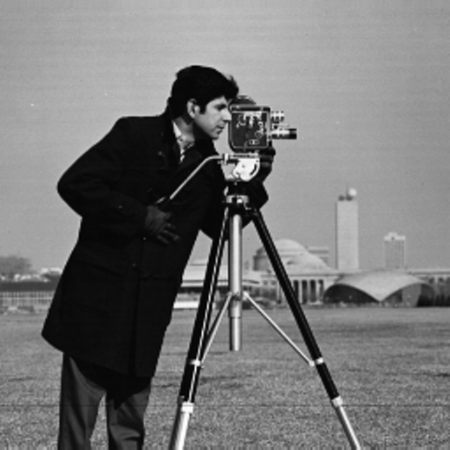}
&\includegraphics[width=0.5\linewidth]{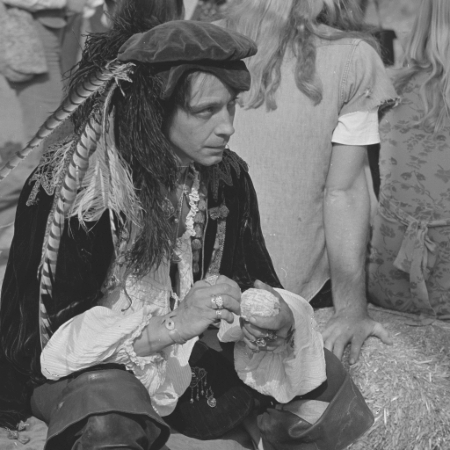}\\
\includegraphics[width=0.5\linewidth]
<\put (0,0){\fcolorbox{white}{white}{\textcolor{black}{\scriptsize{PSNR: 23.08}}}}>{noisy_cameraman.png} &\includegraphics[width=0.5\linewidth]
<\put (0,0){\fcolorbox{white}{white}{\textcolor{black}{\scriptsize{PSNR: 23.10}}}}>{noisy_pirate_c.png}\\
\includegraphics[width=0.5\linewidth]
<\put (0,0){\fcolorbox{white}{white}{\textcolor{black}{\scriptsize{PSNR: 30.11}}}}>{epll_cameraman_c.png}
&\includegraphics[width=0.5\linewidth]
<\put (0,0){\fcolorbox{white}{white}{\textcolor{black}{\scriptsize{PSNR: 28.62}}}}>{epll_pirate_c.png}\\
\includegraphics[width=0.5\linewidth]
<\put (0,0){\fcolorbox{white}{white}{\textcolor{black}{\scriptsize{PSNR: 30.21}}}}>{de_cameraman_m1_c.png}
&\includegraphics[width=0.5\linewidth]
<\put (0,0){\fcolorbox{white}{white}{\textcolor{black}{\scriptsize{PSNR: 28.47}}}}>{de_pirate_m1_c.png}\\
\end{tabular}
\caption{Original test images (1st row). Noisy test images (2nd row) with noise level $\sigma= 0.07$. 
Denoised images with 10-GMM learned from the sketch (3rd row).
Denoised images with regularizers learned with Naive CL-SGD (Algorithm~\ref{alg:meth1_pract}) (4th row).}
\label{fig_denoise_cameraman}
\end{figure}

We present denoising results with Unbiased CL-SGD  in Figure~\ref{fig_denoise_m2}. While Algorithm~\ref{alg:meth2_pract} performs a strong denoising visually, we observe a lack of preservation of the contrast of the original image (hence the PSNR metric is not improved by this method). Unfortunately, the difficulty of parametrization that we observe in the 2D synthetic context becomes even more harder in dimension 9 ($3\times3$ patches). This makes the study of acceleration method fo unbiased CL-SGD even more important for an easy use in the context of image denoising. Moreover forcing some knowledge on the regularizer (e.g. $R_\theta(u) = 0$ if $u$ is a constant patch) might facilitate its training process.

\begin{figure}
\begin{tabular}{cc}
\includegraphics[width=0.5\linewidth]{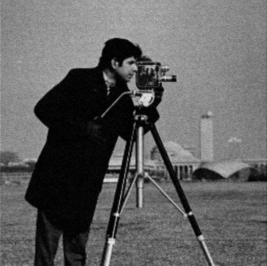}
&\includegraphics[width=0.5\linewidth]{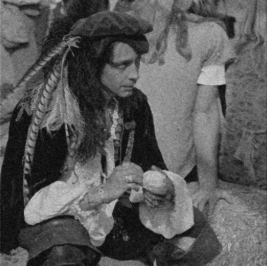}\\
\end{tabular}
\caption{Denoised images with regularizers learned with Unbiased CL-SGD (Algorithm~\ref{alg:meth2_pract}).}
\label{fig_denoise_m2}
\end{figure}

\section{Conclusions}\label{sec: 6}
In this work, we adapt the sketching framework to the learning of a regularizer parametrized by a DNN. This is achieved with a new stochastic gradient descent algorithm, CL-SGD, with convergence guarantees where randomness is used for the discretization of the sketching operator. Our method outperforms the previous work based on  the  deterministic discretization of the forward operator $\mathcal{S}$ on a regular grid. Moreover, it must be noted that only a database of clean images suffices to learn the deep prior.
Experimental results  obtained from  synthetic 2-D data, audio denoising (data in 3D), and real images validates CL-SGD both in terms of computational efficiency and quality of the trained regularizer.

Many questions arise from this work. While the design of the sketching operator and its theoretical justification was heavily influenced by the prior model (e.g. GMM), this design should be revisited and  generalized to DNN based priors. From a practical perspective, it would be interesting to compare denoising results with our compressive approach on huge image databases with plug and play approaches where a training process is performed on pairs of noisy/clean images. Also acceleration methods (with inertia) should be investigated to improve the learning time of the deep prior.

\bibliographystyle{splncs04}
\bibliography{refs}

\end{document}